\newtheorem{theorem}{Theorem}
\newtheorem{assumption}{Assumption}
\newtheorem{lemma}{Lemma}
\newtheorem{objective}{Objective}
\newcommand{\STATE}{\ALC@it}}% <search>
\newcommand{\STATE}{\@ifstar\STATEstar\STATEnostar}}% <replace>
\newcommand{\STATEstar}{\item[]}
\newcommand{\STATEnostar}{\ALC@it}
\begin{document}

%%
%% The "title" command has an optional parameter,
%% allowing the author to define a "short title" to be used in page headers.
\title{Personalized Language Model Learning on Text Data Without User Identifiers}
% \title{Collaborative Learning of Large Vision Model on the Cloud through Small Model on the Device}
%%
%% The "author" command and its associated commands are used to define
%% the authors and their affiliations.
%% Of note is the shared affiliation of the first two authors, and the
%% "authornote" and "authornotemark" commands
%% used to denote shared contribution to the research.

\author{Yucheng Ding}
\authornote{Equal Contribution. Work done during their internships at Tencent.}
\affiliation{%
  \institution{Shanghai Jiao Tong University}
  \city{Shanghai}
  \country{China}
}
\email{yc.ding@sjtu.edu.cn}
\orcid{0000-0001-6095-4947}

\author{Yangwenjian Tan}
\authornotemark[1]
\affiliation{%
  \institution{Shanghai Jiao Tong University}
  \city{Shanghai}
  \country{China}
}
\email{liuan18@sjtu.edu.cn}

\author{Xiangyu Liu}
\affiliation{%
  \institution{WeChat AI, Tencent}
  \city{Beijing}
  \country{China}}
\email{xiangyuliu@tencent.com}

\author{Chaoyue Niu}
\authornote{Corresponding author}
\affiliation{%
  \institution{Shanghai Jiao Tong University}
  \city{Shanghai}
  \country{China}
}
\email{rvince@sjtu.edu.cn}

\author{Fandong Meng}
\affiliation{%
  \institution{WeChat AI, Tencent}
  \city{Beijing}
  \country{China}}
\email{fandongmeng@tencent.com}

\author{Jie Zhou}
\affiliation{%
  \institution{WeChat AI, Tencent}
  \city{Beijing}
  \country{China}}
\email{withtomzhou@tencent.com}

\author{Ning Liu}
\affiliation{%
  \institution{Shanghai Jiao Tong University}
  \city{Shanghai}
  \country{China}
}
\email{ningliu@sjtu.edu.cn}

\author{Fan Wu}
\affiliation{%
  \institution{Shanghai Jiao Tong University}
  \city{Shanghai}
  \country{China}
}
\email{fwu@cs.sjtu.edu.cn}

\author{Guihai Chen}
\affiliation{%
  \institution{Shanghai Jiao Tong University}
  \city{Shanghai}
  \country{China}
}
\email{gchen@cs.sjtu.edu.cn}

\renewcommand{\shortauthors}{Yucheng Ding et al.}

%%
%% The abstract is a short summary of the work to be presented in the
%% article.
\begin{abstract}
In many practical natural language applications, user data are highly sensitive, requiring anonymous uploads of text data from mobile devices to the cloud without user identifiers. However, the absence of user identifiers restricts the ability of cloud-based language models to provide personalized services, which are essential for catering to diverse user needs. The trivial method of replacing an explicit user identifier with a static user embedding as model input still compromises data anonymization. In this work, we propose to let each mobile device maintain a user-specific distribution to dynamically generate user embeddings, thereby breaking the one-to-one mapping between an embedding and a specific user. We further theoretically demonstrate that to prevent the cloud from tracking users via uploaded embeddings, the local distributions of different users should either be derived from a linearly dependent space to avoid identifiability or be close to each other to prevent accurate attribution. Evaluation on both public and industrial datasets using different language models reveals a remarkable improvement in accuracy from incorporating anonymous user embeddings, while preserving real-time inference requirement. 
\end{abstract}

\begin{CCSXML}
<ccs2012>
   <concept>
       <concept_id>10002951.10003227.10003351</concept_id>
       <concept_desc>Information systems~Data mining</concept_desc>
       <concept_significance>500</concept_significance>
       </concept>
   <concept>
       <concept_id>10010147.10010178.10010179</concept_id>
       <concept_desc>Computing methodologies~Natural language processing</concept_desc>
       <concept_significance>500</concept_significance>
       </concept>
   <concept>
       <concept_id>10003120.10003138</concept_id>
       <concept_desc>Human-centered computing~Ubiquitous and mobile computing</concept_desc>
       <concept_significance>500</concept_significance>
       </concept>
 </ccs2012>
\end{CCSXML}

\ccsdesc[500]{Information systems~Data mining}
\ccsdesc[500]{Computing methodologies~Natural language processing}
\ccsdesc[500]{Human-centered computing~Ubiquitous and mobile computing}

%%
%% Keywords. The author(s) should pick words that accurately describe
%% the work being presented. Separate the keywords with commas.
%\keywords{Device-Cloud Collaboration, Mobile LLM Enhancement}

%\received{20 February 2007}
%\received[revised]{12 March 2009}
%\received[accepted]{5 June 2009}

%%
%% This command processes the author and affiliation and title
%% information and builds the first part of the formatted document.

\keywords{Personalized Learning; Identifier-Free Text Data; User Embedding}

%\received{20 February 2007}
%\received[revised]{12 March 2009}
%\received[accepted]{5 June 2009}

%%
%% This command processes the author and affiliation and title
%% information and builds the first part of the formatted document.
\maketitle

% \input{equations/distance_privacy}
% \input{equations/identifiability_privacy}

% \vspace{-0.5em}
\section{Introduction}
% 从智能服务出发，体现用户粒度服务，描述app。
% 数据特点无标签+能上云，NLP场景不同于其他场景
% 推荐个性化，基于embedding layer；目标
 
To provide intelligent services for a large and diverse population of mobile device users, deep language models, particularly Transformer-based networks, have been widely deployed in various practical natural language applications, such as intelligent keyboards~\cite{intelassoc,ChineseGPT}, which predict next words or sentences based on user input context; voice assistants~\cite{siri,cortana,google_voice}, which interpret and respond to user commands; and personal chatbots~\cite{facebook_chat, duolingo}, which engage in conversation based on user queries. The mainstream way to train a language model on the cloud involves collecting user data to build a training dataset. However, due to the sensitivity of user texts, such as messaging records and personal corpora, service providers with strict privacy requirements avoid collecting personal information, ensuring that the cloud-based dataset remains anonymized.

%To train a language model on the cloud, the mainstream practice is to collect the behaviour logs from a large number of users, and then construct the on-cloud dataset from the logs for model training. However, as local text data (e.g., messaging records, personal corpus) are highly sensitive, service providers for some language scenarios with stringent privacy requirements will not collect users' personal information, ensuring the anonymization of the on-cloud dataset.  

% To train a language model on the cloud, the mainstream practice is to collect the behaviour logs from a large number of users, construct the on-cloud dataset from the logs, and train a model over the dataset. However, as private corpora are highly sensitive data, the collection of users' logs and raw data faces stricter limitations than other scenarios such as recommendation systems. In particular, the on-cloud dataset is not allowed to include any explicit user profile information, such as user IDs, IP addresses, age, gender, and occupation, which prevents the cloud from tracking each individual user's data. 

% 在可以上传个性化信息的场景，结合related work中的数据特点
% 匿名性使得无法个性化信息
% 传统场景可以用个性化信息，如何使用（p3）。不上传数据条件，如何做，如何不满足需求
% 我们考虑的信息

Data anonymization, however, makes it challenging to incorporate personalization into the cloud-based language models. Nonetheless, personalization is essential for delivering customized services to diverse users, and has become practical requirements in different industrial applications. One typical application is personalized recommender systems~\cite{DIN,PLE}. Different from the data settings considered in this work, recommender systems currently allow the cloud to collect user data along with personal information, such as user identifier (ID). The personal information is first encoded into a static user embedding and then fed to upper layers to generate personalized model outputs. Another line of work on on-device learning~\cite{MPDA} or cross-device federated learning (FL)~\cite{perFLvbi, perFLshare, perFLdisentangle} assumes that all fields of user data cannot be uploaded or exposed to the cloud, thus eliminating the need for data anonymization. These work proposed to offload a model to mobile devices for local personalized finetuning and real-time inference. However, the model deployed on the resource-constraint mobile devices must be light-weight enough to meet real-time inference requirement, which is, instead, not feasible for applications based on complex language models. Driven by the benefit of personalization and the infeasibility of existing methods in our considered language applications, we consider how to learn a personalized language model over  anonymous user data on the cloud. 

\begin{figure*}[!t]
\centering
\subfigure[On-Device Training of Embedding Distribution Parameters]{
\includegraphics[height=0.17\textwidth]{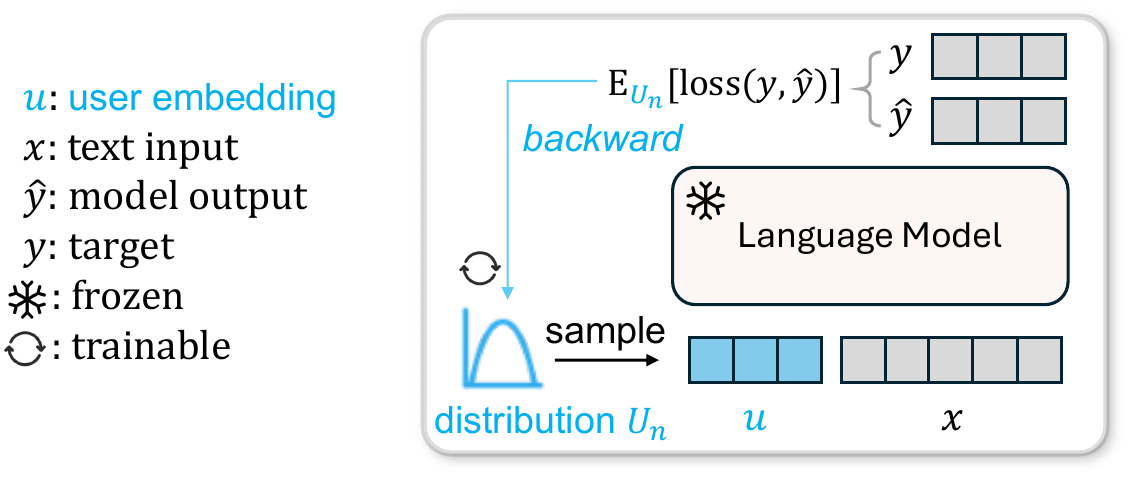}
}
\subfigure[Cloud-Based Training of Language Model]{
\includegraphics[height=0.17\textwidth]{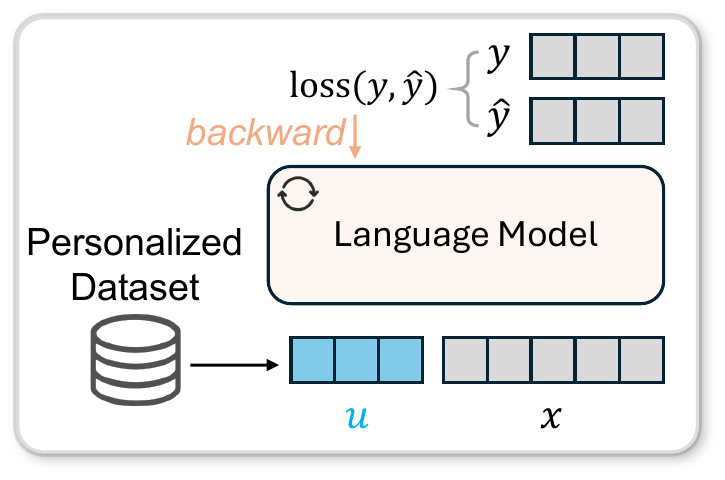}
}
\subfigure[Cloud-Based Real-Time Serving]{
\includegraphics[height=0.17\textwidth]{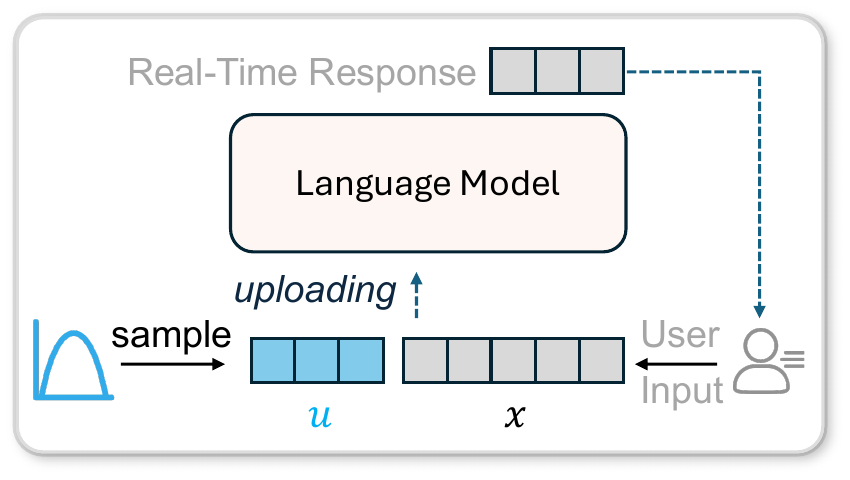}
}
% \vspace{-1em}
\caption{Workflow of IDfree-PL. In the training phase, (a) each mobile device first trains an optimal user distribution $\mathcal{U}_n$ locally and then uploads the concatenation of user embedding sampled from $\mathcal{U}_n$ and original text to the cloud; and (b) the cloud trains the language model over the collected samples. In the inference phase, (c) the cloud provide personalized services to users based on their sampled embeddings and text inputs.}\label{workflow}
\end{figure*}

Following the common practice in recommender systems, we take user embeddings as the representation of personal information in language models. However, the trivial way of letting each mobile device upload the concatenation of a user embedding and the original text data still compromises data anonymization, due to the one-to-one mapping between a static embedding and a specific user. To deal with this problem, we propose to let each user maintain a local distribution on the mobile device and dynamically sample embeddings from this distribution. Such a new paradigm of sampling dynamic user embeddings from user-specific distributions effectively resolves the contradiction between data anonymization and personalization. For anonymization, the one-to-one mapping between users and embeddings can be turned to one-way mapping or many-to-one mapping. Specifically, in a one-way mapping, the cloud observes a mixture of embeddings, but cannot inversely identify the local distributions of different users that generated these embeddings; and in a many-to-one mapping, any embedding observed by the cloud could be generated from the local distributions of multiple users. For personalization, the parameters of the user-specific distribution for generating embeddings are optimized over the user's local data to enhance the model performance. 

% 
% In addition, another requirement for a user-specific distribution is that the sampled embeddings from it should maximize the model's performance over the user's local data in expectation. 

%Based on the above two parallel requirements on individual user-specific distributions and their interrelationships, 
%%, indicating the inability of reconstructing the user-specific distributions from the on-cloud observed embeddings. 

In this work, we propose a user \underline{ID}entifier \underline{free} \underline{P}ersonalized \underline{L}earning (IDfree-PL) framework and depict the workflow in Figure~\ref{workflow}. IDfree-PL imposes requirements on the choices of user-specific distributions to ensure data anonymization and trains the parameters of the chosen distributions to achieve personalized model performance. To implement a one-way user-embedding mapping, we theoretically establish that the function space of user-specific distributions for sampling embeddings must be linearly dependent, ensuring that the mixture of distributions from multiple users is non-identifiable to the cloud. A common example of such a distribution is Beta distribution. Alternatively, to implement a many-to-one user-embedding mapping, we reveal that there is no limit on the distribution space, but user-specific distributions need to be close to each other, such that the probability of the cloud wrongly attributing any sampled embedding to its source distribution is high. Moreover, to obtain the parameter of each user's local distribution, the key difference from conventional user embedding method is the application of reparameterization technique in training the distribution parameters while freezing the language model to minimize the loss over the local user data. Given new data that concatenates the sampled user embeddings and the original texts, the cloud fine-tunes the language model for personalized input adaptation and further provides real-time inference service.

We summarize the key contributions of this work as follows:
\begin{itemize}
    \item We identify a new practical requirement in cloud-based natural language applications: how to learn a personalized language model on the cloud over the text data anonymously uploaded from mobile devices without user identifiers.
    \item We, for the first time, propose to dynamically sample user embeddings from each user's local distribution and concatenate them with original text data. Such design enables personalization by optimizing distribution parameters over local data, while achieving data anonymization by breaking the one-to-one mapping between users and embeddings.
    \item We further theoretically demonstrate the required properties of user-specific distributions for generating anonymous embeddings, which should be in a linearly dependent space to guarantee non-identifiability or be close from each other to ensure the high probability of misattribution. 
    \item We extensively evaluate\footnote{The code is available from https://github.com/sjtu-yc/IDfree-Personalized-Learning.} the proposed design IDfree-PL over three public datasets and one industrial dataset using four representative language models for four common tasks. Compared with the cloud-based model without personalization, IDfree-PL improves the inference accuracy by up to 5.69\% while adding at most 0.01s of inference latency and keeping data anonymization. 
    % Specifically, for the test samples with the highest sensitivity to user embeddings, IDfree-PL boosts accuracy by an average of 19.22\%.
\end{itemize}

%% 个性化embedding对哪部分用户提升大。在需要个性化的用户群体上提升达到30%
%reducing the distances between distributions are sufficient to make on-cloud user data hard to identify.
%\item  

% \vspace{-1em}
\section{Preliminaries}\label{sec:bg}

% practical setting: 收集到的数据没有user profile/ID
% user ID能带来很大的acc提升
% 如何在无法上传user ID的条件下做个性化学习

\subsection{Application Requirement}

In the natural language applications, users' local text data are highly sensitive. Therefore, service providers for applications with stringent privacy requirements avoid collecting and storing users' personal information, making data anonymization a practical requirement. We take the anonymous mobile keyboard (such as WeChat IME) as an example. The service provider establish communication channels for anonymous data uploading, the APP users anonymously upload their local data, and the cloud stores the mixture of numerous users' data without personal user information, such as user ID or IP address.

%As a result, data anonymization has become a growing trend. 
% In its cloud-based dataset, all user identifiers, such as user IDs and IP addresses, as well as the user profile, such as gender and age, are erased, preventing the cloud from tracking of individual users. 

However, incorporating personal user information into the cloud-based dataset can enhance the language model's ability to deliver customized services to diverse users, thus significantly improving its inference performance. We experimentally validate the improvement of introducing user ID to a typical sentiment analysis task. We take the Amazon-Kindle dataset, which consists of reviews on the electronic books and the corresponding ratings from 1,435 real-world users after preprocessing. We use GPT2~\cite{gpt2}, T5~\cite{t5}, or Bart~\cite{bart} as the backbone network and add a fully connected layer to form the rating classification model. We compare the model's performance given the original text inputs as well as the personalized inputs as follows: 
\begin{equation*}
\begin{aligned}
    &\text{Original Text Input := `<review content>'},\\
    &\text{Personalized Text Input := `user<user ID>|<review content>'}.
\end{aligned}
\end{equation*}
From the evaluation results shown in Figure~\ref{id_figure}, we can observe that compared with the inputs without user ID, the inputs with user ID averagely increase the inference accuracy by 3.94\%. The major reason is that personal information helps to reduce bias in ratings from different users.

\begin{figure}[!t]
\centering
  \includegraphics[width=0.95\columnwidth]{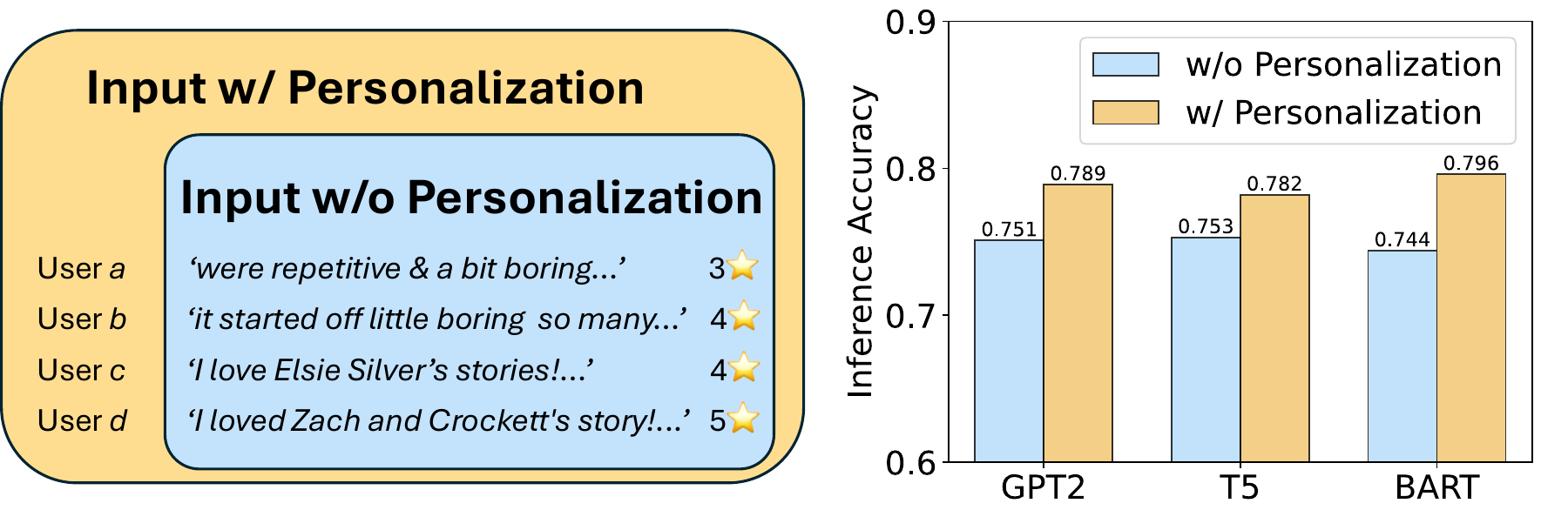}
  % \vspace{-1em}
  \caption{Rating classification accuracy over Amazon-Kindle dataset with and without user ID.}
  \label{id_figure}
\end{figure}

% To ensure that each user has sufficient data, we select the top-2000 users with the most samples. Additionally, we remove users who scored in fewer than three categories to eliminate data with low quality from ``paid reviewers" and keep 1,435 users in the experiment. Regarding the training and test sets, about 80\% of the samples, which are with the timestamps no more than 1,632,712,835,970 fall into the training set, while the rest falls into the test set.

% However, in most NLP scenarios, the private corpora are very sensitive data, and the protection of data privacy is more strict. We take the virtual keyboard from mobile APP Y as example. {\textcolor{blue}{Although user data is uploaded to the cloud for services, direct identifiers such as user ID and user IP are erased. This is to prevent the cloud from building a private corpus for each user, thereby xxx}}.

Driven by the application requirement of data anonymization as well as the performance improvement of personal information, we consider how to enable personalized language model learning on the cloud without user ID in text data. 

%Driven by the significant improvement that a simple user ID can bring, it is well motivated to study how to perform personalized learning in practical settings when personal information are not included in the on-cloud dataset.

\subsection{Trivial User Embedding Method}
% prompt tuning

We first revisit how a language model handles the inputs with user ID, where `user<user ID>' is first processed by a tokenizer, then encoded into token embeddings, and finally fed to upper layers. Considering the practical setting that users do not include user IDs in their uploaded data, a trivial method to obtain personalized user embedding is to offload this task to each mobile device. In particular, a user $n$ learns his/her user embedding $u_n$ on the mobile device with efficient prompt tuning method \cite{prompt_tuning} through optimizing the following objective 
\begin{equation}\label{trivial_local_obj}
    \min_{u_n} \frac{1}{|\mathcal{D}_n|} \sum_{(x,y)\in \mathcal{D}_n} l(h([u_n;x]), y),
\end{equation}
where $\mathcal{D}_n$ denotes user $n$'s local training dataset; $(x,y)$ denotes a sample in the format of (input, target); $[u_n; x]$ denotes the concatenation of the user embedding and the original input; $h$ denotes the language model and can be frozen during the training process; and $l(\cdot,\cdot)$ denotes the loss function. After adding user embedding, the language model's input is
\begin{equation*}
    \text{Model Input := } (\text{user embedding},\ \text{<original input>}).
\end{equation*}

We then consider how to deploy the model with user embedding for real-time inference. (1) For cloud-based model serving, each mobile device needs to upload static user embedding, which functions as a new user ID and does not meet the desired anonymization requirement; and (2) for on-device model serving, the user embedding does not need to be uploaded, maintaining data anonymization. However, the inference latency of language models on resource-constraint mobile devices fails to meet practical real-time requirements. For example, as evaluated in Section~\ref{industrial_result}, the prediction latency of next words or sentences using Qwen1.8B on the Honor V30 Pro testbed is over 83 seconds, far exceeding the latency requirements of mobile keyboard applications.

\subsection{New Design Objectives}

To achieve real-time inference, we still keep language model serving on the cloud. Additionally, to leverage personalized user embedding while keeping data anonymization, we need to break the one-to-one mapping between a user and his/her fixed user embedding. The key new idea is to dynamically sample user embeddings from a user-specific distribution, denoted as $\mathcal{U}_n$ for user $n$. In other words, the anonymous data from a user uploaded to the cloud as the language model's input become
\begin{align*}
    \text{Model Input := } (&\text{user embedding sampled from a local distribution},\\ &\text{<original input>}).
\end{align*}

%Then, each piece of local data will be concatenated with different embeddings sampled from $\mathcal{U}_n$. The only new addition to the on-cloud dataset is a mixture of embeddings sampled from $\mathcal{U}_n (n=1,2,\cdots)$. 

%To ensure the personalization performance while protecting the user privacy, the user distributions need to meet the following requirements: (A) $\mathcal{U}_n$ should minimize the loss over user $n$'s local data; and (B) it should be impractical for the cloud to track users data using the mixture of embeddings, which are newly introduced in the on-cloud dataset.
%% where $\mathcal{D}_n$ denotes user $n$'s local dataset, $(x,y)$ denotes a sample from $\mathcal{D}_n$ in the format of (input, target), $h$ denotes the initial model, $[u;x]$ denotes the concatenation of a sampled user embedding $u$ and $x$, and $l(\cdot,\cdot)$ denotes the loss function.

Under the new framework of generating user embeddings from a local distribution, we formally define design objectives from both personalized model performance and data anonymization, thereby guiding the corresponding requirements on the distribution. 

First, each user's choice of a specific distribution should minimize the loss over local data, formalized as 
\begin{objective}[Personalized Model Performance]\label{obj:person}
The optimization objective for user $n$'s local distribution $\mathcal{U}_n$ for generating personalized user embeddings is 
\begin{equation}\label{local_obj}
    \min_{\mathcal{U}_n} \frac{1}{|\mathcal{D}_n|} \sum_{(x,y)\in \mathcal{D}_n} \mathbb{E}_{u_n\sim\mathcal{U}_n} \left[l(h([u_n;x]), y)\right].
\end{equation}
\end{objective}
\noindent This new objective differs from the conventional objective of on-device training for user embedding (i.e., equation \ref{trivial_local_obj}) in identifying a user-specific distribution rather than a static user embedding. 

%%We notice that if the cloud wants to track each individual user’s data, it should first reconstruct each $\mathcal{U}_n$ from all the collected embeddings sampled from $\mathcal{U}$ by some mixture-of-model techniques (e.g., Expectation-Maximization), and attribute each embedding to its source $\mathcal{U}_n$.
%Therefore, the tracking process can be prevented by either disabling the reconstruction of $\mathcal{U}_n$, which is defined as \emph{non-identifiability} 
%The mixture of the user embedding distributions is not identifiable~\cite{finitemixture_id_2}.
%constitutes the function space of $\mathcal{U}_n$'s CDF for any $n$, .

Second, the embedding distributions chosen by users should also ensure that it is hard for the cloud to identify any individual user based on the uploaded local embeddings. Since all the users anonymously upload their data with randomly sampled embeddings, from the view of the cloud, it can only observe user embeddings generated by a mixture of distributions. We let $N$ denote the number of users and express the mixture of embedding distributions as $\mathcal{U}=\sum_{n=1}^N w_n \mathcal{U}_n$, where $w_n$ denotes user $n$'s weight, which is proportional to the size of user $n$'s truly uploaded samples and is unknown to the cloud, and $\sum_{n=1}^N w_n=1$. Therefore, if the cloud attempts to identify a certain user $n$ given collected user embeddings, it needs to reconstruct the user $n$'s local distribution $\mathcal{U}_n$ from the global distribution $\mathcal{U}$ and further determine which embeddings are generated by the user-specific distribution $\mathcal{U}_n$. We thus define the anonymization of user embedding from either (1) non-identifiability of user-specific distributions from the mixture of distribution in objective \ref{not_id}, intuitively forming one-way user-embedding mapping; or (2) wrongly attributing collected user embeddings to a specific user's distribution, even when all user-specific distributions are known, in objective \ref{not_att_acc}, intuitively forming many-to-one user-embedding mapping. For non-identifiability, its formal definition has been given in     
in the existing work on mixture of models~\cite{finitemixture_id_1,finitemixture_id_2}. In our context, we formulate a distribution with cumulative distribution function (CDF) and let $d$ denote the dimension of a user embedding. We let $\mathcal{F}=\{F(\mathcal{U}_n; \theta) | \forall n \}$ represent the function space collection of any user-specific distribution $\mathcal{U}_n$'s CDF, which is a family of $d$-dimensional CDFs with parameter $\theta$. We thus define non-identifiability in objective \ref{not_id}. 

\begin{objective}[Non-Identifiability of User-Specific Embedding Distribution]\label{not_id}
  The mixture of finite user-specific embedding distributions $\mathcal{U}$ is not identifiable, if it does not have a unique representation as a combination of distributions from $\mathcal{F}$, namely, there exists another weights $\{c_1,c_2,\cdots,c_M\}$ and distribution parameters $\{\theta'_1,\theta'_2,\cdots,\theta'_M\}$ such that \text{CDF of $\mathcal{U}$} can be expressed as  
\begin{equation}
    \sum_{n=1}^N w_n F(\mathcal{U}_n;\theta_n) = \sum_{m=1}^M c_m F(\mathcal{U}_n;\theta'_m),
\end{equation}
where $\{w_1,w_2,\cdots,w_N\}\not=\{c_1,c_2,\cdots,c_M\}$ or $\{\theta_1,\theta_2,\cdots,\theta_n\}\not=\{\theta'_1,\theta'_2,\cdots,\theta'_M\}$, for any permutation of $m$ on $\{1,2,\cdots,M\}$.
\end{objective}

%or by preventing precise attribution of embeddings. Correspondingly, we formulate two sufficient conditions, and meeting either of them will satisfy the requirement (B) as follows.

Even if user-specific distributions are identifiable, we also define objective \ref{not_att_acc} to guarantee the high probability of user embedding misattribution.

\begin{objective}[Misattribution of User Embedding]\label{not_att_acc}
Given known user-specific distributions $\forall n, \mathcal{U}_n$ and a user embedding $u_n$ randomly sampled from $\mathcal{U}_n$, the probability of the cloud not attributing $u_n$ to $\mathcal{U}_n$ is  
\begin{equation}
    \Pr\left(\mathcal{U}_n \neq \arg\max_{\mathcal{U}_k}\Pr(\mathcal{U}_k | u_n)\right) \geq 1 - \epsilon,
\end{equation}
where $\Pr(\mathcal{U}_k|u_n)$ denotes the posterior probability of $u_n$ sampled from $\mathcal{U}_k$ according to Bayes' theorem, and $\epsilon$ denotes a small term. 
\end{objective}

In what follows, we first introduce the design to achieve objectives \ref{obj:person} and \ref{not_id} in Section \ref{sec:design:non:ident} and then present the design to achieve objectives \ref{obj:person} and \ref{not_att_acc} in Section \ref{pl_aue}.

\begin{comment}
%

\begin{objective}
The attribution accuracy is limited even $\mathcal{U}_n(n=1,2,\cdots,N)$ are known. For a $u$ truly sampled from $\mathcal{U}_n$, it is of low probability that $n=\arg\max_{k}\Pr(\mathcal{U}_k|u)$, where $\Pr(\mathcal{U}_k|u)$ denotes the posterior probability of $u$ being sampled from $\mathcal{U}_k$ according to Bayes' theorem,.
\end{objective}

Therefore, in our design, the local distributions of the users, i.e., \(\{\mathcal{U}_n | n=1,2,\cdots,N\}\), need to either satisfy equation~\ref{local_obj} and Condition~\ref{not_id}, or satisfy equation~\ref{local_obj} and Condition~\ref{not_att_acc}.
\end{comment}
\section{Personalized Learning with Non-Identifiable User Embedding}\label{sec:design:non:ident}

\subsection{Algorithm Design}
%Based on the objective that \(\{\mathcal{U}_n | n=1,2,\cdots,N\}\) need to satisfy equation~\ref{local_obj} and Condition~\ref{not_id}, we design a new personalized learning framework without user identifiers. 

We first consider the design of on-device training algorithm to guarantee personalized model performance in objective \ref{obj:person}. The presence of the expectation under a sampled random variable from a parameterized distribution makes it challenging to directly optimize equation~\ref{local_obj}. Inspired by variational autoencoder (VAE)~\cite{vae}, we adopt the reparameterization trick, which introduces an auxiliary random variable $\xi$, independent from sampled user embedding $u_n$, to facilitate the computation of the gradient of an expectation. Formally, we let $\theta$ denote the parameters of user-specific local distribution $\mathcal{U}_n$, such that $u_n = g_\theta(\xi)\sim \mathcal{U}_n$, where $\xi\sim p(\xi)$, and $g_\theta(\cdot)$ denotes the mapping function. The gradient is rewritten as
\begin{equation}\label{re-param}
    \nabla_\theta \mathbb{E}_{u_n\sim\mathcal{U}_n} \left[l(h(u_n;x), y)\right] = \nabla_\theta \mathbb{E}_{\xi\sim p(\xi)}\left[l(h(g_\theta(\xi);x), y)\right],
\end{equation}
where the right-hand side is estimated using Monte Carlo methods. During on-device training, the language model $h$ is frozen, and only the parameters $\theta$ of user-specific distribution need to be trained.  

%For on-device efficiency, the number of sampling is set to 1 as default. 

\begin{algorithm}[!t]
  \caption{Personalized Language Model Learning with Non-Identifiable User Embedding}\label{nonid_alg}
  \begin{algorithmic}[1]
    \REQUIRE The number of users $N$; the latest cloud-based language model $h$; user-specific distributions $\{\mathcal{U}_n|n=1,2,\cdots,N\}$ for generating user embeddings.
    % Training Phase
    % device-side
    \STATE* {\textbf{Training Phase on Mobile Devices and the Cloud}}
    \STATE* {\tt /* Each Mobile Device's Process */}
    \FOR{$n$ from $1$ to $N$ in parallel}
      \STATE Downloads $h$ from the cloud;
      \STATE Based on equation~\ref{re-param}, freezes $h$, and trains the optimal parameters of the local distribution $\mathcal{U}_n$ w.r.t. equation~\ref{local_obj}, the CDF of which is $F(\cdot;\theta_n)\in \mathcal{F}$;
      \FOR{Each local data sample $(x,y)$}
        \STATE Samples $u_n$ from $\mathcal{U}_n$;
        \STATE Anonymously uploads $([u_n; x],y)$ to the cloud;
      \ENDFOR
    \ENDFOR
    % cloud-side
    \STATE* {\tt /* Cloud's Process */}
    \STATE Receives samples from all users and constructs the enhanced training set with user embeddings $\mathcal{D}^{+}$;
    \STATE Finetunes $h$ over $\mathcal{D}^{+}$;
    % Inference Phase
    %  device-side
    \STATE* {\textbf{Real-Time Inference Phase on the Cloud}}
    \STATE* {\tt /* Each Mobile Device's Process */}
    \STATE For user $n$'s original text input $x$, samples $u_n$ from $\mathcal{U}_n$ and uploads $[u_n;x]$ to the cloud;
    % cloud-side
    \STATE* {\tt /* Cloud's Process */}
    \STATE Returns the inference result $h([u_n;x])$ to the mobile device.
  \end{algorithmic}
\end{algorithm}

We then consider how to choose user-specific embedding distribution $\mathcal{U}_n$ to satisfy objective \ref{not_id}. From \cite{finitemixture_id_2}, we know that the identifiability of a mixture of distributions is closely related to the linear dependence of its function space.

\begin{lemma}[\cite{finitemixture_id_2}]\label{lemma:id_condition}
    A necessary and sufficient condition for the identifiability of all finite mixtures within the family $\mathcal{F}$ is that $\mathcal{F}$ forms a linearly independent set over the field of real numbers.
\end{lemma}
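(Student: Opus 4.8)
The plan is to prove both implications by viewing each CDF in $\mathcal{F}$ as an element of the real vector space of functions on $\mathbb{R}^d$ and relating ``unique mixture representation'' to linear independence, following \cite{finitemixture_id_2} (here linear independence of the possibly infinite family $\mathcal{F}$ means that every finite subset of it is linearly independent). Throughout I adopt the standard convention that a finite mixture $\sum_{n=1}^N w_n F_n$ has pairwise distinct components $F_n \in \mathcal{F}$, strictly positive weights $w_n$, and $\sum_n w_n = 1$ (repeated components being merged into one with the summed weight). For sufficiency, assume $\mathcal{F}$ is linearly independent and suppose two mixtures coincide, $\sum_{n=1}^N w_n F_n = \sum_{m=1}^M c_m G_m$. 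Enumerate the distinct CDFs $H_1,\dots,H_K$ appearing on either side and rewrite both sides as $\sum_k a_k H_k$ and $\sum_k b_k H_k$, where $a_k = w_n$ if $H_k = F_n$ for the unique such $n$ and $a_k = 0$ otherwise, and similarly $b_k$ from the $G_m$. Then $\sum_k (a_k - b_k) H_k = 0$, so linear independence forces $a_k = b_k$ for all $k$; since $a_k > 0$ precisely on the indices of the $F_n$ and $b_k > 0$ precisely on the indices of the $G_m$, the component sets and their matched weights agree. Hence every finite mixture over $\mathcal{F}$ is identifiable.

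For necessity I argue the contrapositive: if $\mathcal{F}$ is linearly dependent, pick pairwise distinct $H_1,\dots,H_K \in \mathcal{F}$ and reals $\lambda_1,\dots,\lambda_K$, not all zero, with $\sum_k \lambda_k H_k = 0$. Letting every coordinate of the argument tend to $+\infty$ and using $H_k \to 1$ shows $\sum_k \lambda_k = 0$, so the sets $P = \{k : \lambda_k > 0\}$ and $Q = \{k : \lambda_k < 0\}$ are nonempty, disjoint, and $S := \sum_{k \in P} \lambda_k = -\sum_{k \in Q} \lambda_k > 0$. Dividing the dependence relation by $S$ gives
\begin{equation*}
    \sum_{k \in P} \frac{\lambda_k}{S}\, H_k = \sum_{k \in Q} \frac{-\lambda_k}{S}\, H_k,
\end{equation*}
i.e.\ a single distribution expressed as two finite mixtures over $\mathcal{F}$ with strictly positive weights summing to $1$, whose component sets $\{H_k : k \in P\}$ and $\{H_k : k \in Q\}$ are nonempty and disjoint. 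This is a violation of identifiability in the sense of Objective~\ref{not_id}, so not all finite mixtures in $\mathcal{F}$ are identifiable.

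The sufficiency direction is essentially bookkeeping. The subtle part is necessity: a raw linear dependence comes with arbitrary, possibly negative coefficients that need not sum to one, whereas a mixture requires a probability vector of weights. The two observations that make the construction go through are that the coefficients in any dependence among CDFs must sum to zero (from the normalization $H_k \to 1$ at infinity), which guarantees a nontrivial split into positive and negative parts, and that renormalizing each part by its total mass produces two genuine mixtures with disjoint component sets. It is also worth fixing the ``distinct components, merged weights'' convention at the outset, so that ``two different representations'' is unambiguous when matched against the statement of Objective~\ref{not_id}.
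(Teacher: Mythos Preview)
Your argument is correct and is precisely the classical proof of Yakowitz and Spragins that \cite{finitemixture_id_2} contains. The paper itself does not reproduce any argument for this lemma; its proof consists solely of the line ``Please refer to the main Theorem in \cite{finitemixture_id_2},'' so you have supplied exactly what the paper defers to the reference for.
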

\begin{proof}
   Please refer to Appendix \ref{app:proof:lemmas}.
\end{proof}

\noindent If the elements in $\mathcal{F}$ of user embedding distributions are linearly independent, there exists a unique solution for an observed mixture of collected user embeddings, implying that the user-specific distribution $\mathcal{U}_n$ involved in the mixture can be identified. Conversely, if the elements in $\mathcal{F}$ are linearly dependent, an observed mixture may correspond to infinite solutions, making the identification of $\mathcal{U}_n$ from the mixture infeasible. Therefore, we let each user choose $\mathcal{U}_n$ with the function space being linearly dependent. Typical examples are Beta distribution and Pearson Type VI distribution. 

% which could be the latest online cloud-based model
%After several epochs of local training,

Based on the above design rationales, we propose the personalized language model learning algorithm with non-identifiable user embedding in Algorithm~\ref{nonid_alg}. In the training phase, each mobile device first downloads the latest model $h$ from the cloud (line 2) and then trains an optimal user-specific distribution $\mathcal{U}_n$ with linearly dependent function space based on equation~\ref{re-param} (line 3).  To facilitate cloud-based training, for each local sample, the mobile device samples a user embedding from $\mathcal{U}_n$ and anonymously uploads the concatenation of the sampled user embedding and original text data to the cloud (lines 5-6). The cloud finetunes the model $h$ over the collected samples from all users to adapt to the new input space (line 7--8). In the real-time inference phase, the mobile device sends the sample with a randomly sampled user embedding, and the cloud returns the inference result (lines 9-10).

%We show the following Theorem for any mixture of user distributions:

\subsection{Algorithm Analysis}\label{nonid_analysis}

We prove the non-identifiability of Algorithm~\ref{nonid_alg}. We instantiate user-specific distribution $\mathcal{U}_n$ with Beta distribution. 

\begin{theorem}\label{nonid_beta}
If each dimension of each user's embedding follows a Beta distribution, the mixture of distributions is non-identifiable.
\begin{proof}
 Please refer to Appendix \ref{app:theorem1:proof}.
\end{proof}
\end{theorem}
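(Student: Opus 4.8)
The plan is to invoke Lemma~\ref{lemma:id_condition}, which reduces non-identifiability of the finite mixture to establishing that the family $\mathcal{F}$ of Beta CDFs is \emph{linearly dependent} over $\mathbb{R}$. So the whole task becomes: exhibit a nontrivial finite linear combination of Beta CDFs (with possibly different shape parameters $(\alpha,\beta)$) that vanishes identically on $[0,1]$, and then lift this to the $d$-dimensional product-of-Betas case. The key algebraic fact I would use is a known recurrence/identity for the regularized incomplete Beta function $I_x(\alpha,\beta)$, namely the contiguous relations such as
\begin{equation*}
I_x(\alpha,\beta) = I_x(\alpha+1,\beta) + \frac{x^{\alpha}(1-x)^{\beta}}{\alpha B(\alpha,\beta)},
\end{equation*}
together with the complementary identity $I_x(\alpha,\beta) = 1 - I_{1-x}(\beta,\alpha)$. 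Because the CDF of $\mathrm{Beta}(\alpha,\beta)$ is exactly $F(x;\alpha,\beta)=I_x(\alpha,\beta)$, such recurrences let one express one Beta CDF as a rational-coefficient combination of two others \emph{plus an elementary term}; the goal is to arrange several of these so that the leftover elementary (polynomial $\times$ power) terms cancel, leaving a pure linear relation among Beta CDFs with nonzero coefficients summing appropriately.

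Concretely, first I would write down the simplest witnesses: for integer-ish parameters the incomplete Beta function degenerates to a finite binomial sum, e.g. $I_x(1,1)=x$, $I_x(2,1)=x^2$, $I_x(1,2)=2x-x^2$, and more generally $I_x(k,1)=x^k$, $I_x(1,k)=1-(1-x)^k$. Among these one can solve small linear systems: for instance $I_x(2,1)+2\,I_x(1,2)-2\,I_x(1,1)=x^2+(4x-2x^2)-2x=-x^2+2x-0\cdot$—I would tune the coefficients so the polynomial identity holds for all $x\in[0,1]$, which is a finite-dimensional linear-algebra check since all these CDFs are polynomials of bounded degree. The existence of \emph{some} nontrivial relation is then guaranteed the moment the number of Beta CDFs chosen exceeds the dimension of the polynomial space they span; picking enough $I_x(k,1)$ and $I_x(1,k)$ forces linear dependence, and I would extract an explicit one for concreteness (this gives genuinely distinct parameter pairs and nonzero weights, matching the requirement $\{w_n\}\neq\{c_m\}$ or $\{\theta_n\}\neq\{\theta'_m\}$ in Objective~\ref{not_id}).

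Then I would handle the $d$-dimensional setting. A user's embedding has independent coordinates each Beta-distributed, so its joint CDF is the product $\prod_{j=1}^d I_{x_j}(\alpha_j,\beta_j)$. Linear dependence of the one-dimensional family is inherited by the product family: fix a nontrivial relation $\sum_i \lambda_i I_x(\alpha_i,\beta_i)\equiv 0$ in one coordinate and pad the other $d-1$ coordinates with a common fixed Beta CDF; the product family then satisfies $\sum_i \lambda_i \prod_j (\cdots) \equiv 0$, so $\mathcal{F}$ (now $d$-dimensional CDFs) is linearly dependent. By Lemma~\ref{lemma:id_condition} the mixture $\mathcal{U}=\sum_n w_n \mathcal{U}_n$ fails to be identifiable, i.e., Objective~\ref{not_id} is met, which is exactly the claim.

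The main obstacle I anticipate is not conceptual but bookkeeping: producing a clean \emph{explicit} dependence relation among Beta CDFs with honestly distinct $(\alpha,\beta)$ pairs and verifying the polynomial/elementary-term cancellation, rather than merely appealing to a dimension count. The cleanest route is probably the degenerate integer-parameter subfamily where CDFs are low-degree polynomials, so the "hard part" collapses to solving one small explicit linear system and checking it; the subtlety to watch is that Lemma~\ref{lemma:id_condition} is about linear dependence of the \emph{whole} family $\mathcal{F}$, so one dependent finite subset suffices, and I should make sure the constructed relation's parameters are legitimately in the Beta family (all shape parameters positive) and that the resulting "weights" can be renormalized into two genuinely different mixing representations of the same CDF.
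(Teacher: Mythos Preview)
Your plan is correct and would succeed, but the paper takes a shorter route that eliminates the bookkeeping you identify as the main obstacle. Rather than working with CDF recurrences that carry leftover $x^{\alpha}(1-x)^{\beta}$ terms, or restricting to integer parameters and doing a polynomial dimension count, the paper writes down the single PDF identity
\[
f(x;\alpha,\beta)\;=\;\frac{\alpha}{\alpha+\beta}\,f(x;\alpha+1,\beta)\;+\;\frac{\beta}{\alpha+\beta}\,f(x;\alpha,\beta+1),
\]
obtained by multiplying the numerator $x^{\alpha-1}(1-x)^{\beta-1}$ by $x+(1-x)=1$ and simplifying the Beta-function ratios via $\Gamma(z+1)=z\Gamma(z)$. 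The coefficients are already positive and sum to~$1$, so this is a genuine convex combination: any component $\mathcal{U}_n$ in a given mixture can be swapped for two shifted Beta components with weights $w_n\cdot\tfrac{\alpha}{\alpha+\beta}$ and $w_n\cdot\tfrac{\beta}{\alpha+\beta}$, producing a second valid mixture representation directly, for \emph{arbitrary} $\alpha,\beta>0$. Your incomplete-Beta recurrence and its companion for $\beta\mapsto\beta+1$ in fact combine to give exactly the integrated version of this identity, so you were one algebraic step away from the clean relation. What the paper's route buys is that there is nothing to cancel, no restriction to integer shape parameters, and no need to separate positive and negative coefficients to recover proper mixture weights; what your route buys is a self-contained dimension-count argument that does not require knowing the contiguous relation in advance. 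The lift to $d$ dimensions (apply the one-coordinate relation and leave the remaining $d-1$ coordinates fixed) is the same in both arguments.
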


%In addition to Beta distribution, $\mathcal{U}_n$ can also take other distribution with the function space being linearly dependent
%Theorem~\ref{nonid_beta} shows the identifiability of Algorithm~\ref{nonid_alg} with $\mathcal{U}_n$ takes Beta distribution. Furthermore, it is feasible to set other function space of user distributions without linear independence in practice.  

%\subsection{Inference Latency Analysis}
%First, as the sampling step is independent from concatenation (line 9), indicating that the
%store some sampled embeddings, and directly fetch one for each user input in the inference phase, which almost has no cost.
%Second, the uploading latency is related to the dimension of user embedding. Take an user embedding of 768 dimensions for example, the uploading costs only 0.6 millisecond when the network bandwidth is 5MB per seconds, much smaller than the model forward cost. 
%, which is much faster than the model's forward pass time
%in terms of model forwarding, concatenating an user embedding is equivalent to adding only one token to the model input, thus bringing little forwarding latency. 
%%actually brings little extra inference latency, which ensures the users' experiences in practice.

We then analyze the efficiency of Algorithm~\ref{nonid_alg}. For on-device offline training, only the parameters of user-specific distribution need to be updated, while the language model is frozen. Therefore, it is resource efficient for mobile devices. For cloud-based online inference, compared with conventional cloud-based model serving without user embedding, Algorithm~\ref{nonid_alg} additionally requires to sample a user embedding, upload the embedding to the cloud, and increase the input length of the cloud-based model. First, each mobile device can offline sample user embeddings, store them locally, and directly fetch them for online inference. Second, uploading latency depends on the dimension of the user embedding. For example, uploading a 768-dimensional user embedding takes only 0.6 millisecond with a network bandwidth of 5 MB/s. Third, adding a user embedding is equivalent to adding just one token to the model input, resulting in little increase in forwarding latency. Overall, Algorithm~\ref{nonid_alg} strictly satisfies the real-time requirement of cloud-based model inference. 

\section{Personalized Learning with Misattributed User Embedding}\label{pl_aue}

\subsection{Algorithm Design}

%To meet equation~\ref{local_obj} and Condition~\ref{not_att_acc}, 

%we design a variant of Algorithm~\ref{nonid_alg}, 

%which has no limitation on $\mathcal{U}_n$'s type and requires appropriately setting the hyper-parameters of on-device training.

To avoid accurate user embedding attribution, the key intuition is that if the distances between different users' distributions (i.e., $\forall n, \mathcal{U}_n$) are small, it is hard to determine the source distribution of a sampled embedding according to its posterior probability. The key difference from the non-identifiable design in Algorithm~\ref{nonid_alg} is that we do not need to limit the distribution type and just require appropriately setting of on-device training hyper-parameters. Specifically, during on-device training process (line 3), we propose to let each mobile device train $\mathcal{U}_n$ from the same initialization and bound local updates by using a small learning rate and applying gradient clipping techniques. In detail, user $n$ first initializes the parameter $\theta_n$ over the local distribution $\mathcal{U}_n$ with $\theta$ shared by all users. Then, user $n$ iteratively updates the parameter $\theta_n$ of local distribution $\mathcal{U}_n$ using gradient descent as
\begin{equation}
    \theta_n = \theta_n - \eta G({\hat{\nabla}_{\theta_n}}),
\end{equation}
where $\eta$ denotes the learning rate, $\hat{\nabla}_{\theta_n}$ denotes the observed gradient, and $G(\cdot)$ denotes the gradient clipping function. 

% as an implementation of $\mathcal{U}_n$. Let 
% which is a diagonal matrix given different dimensions of a user embedding should be independent from each other. 

\subsection{Algorithm Analysis} 

We analyze the misattribution probability of user embedding. We instantiate user-specific distribution  $\mathcal{U}_n$ with commonly used multi-dimensional Gaussian, namely, $\mathcal{U}_n=\mathcal{N}(\mathbf{\mu}_n, \Sigma_n)$, where the distribution parameters $\mathbf{\mu}_n\in \mathbb{R}^d$ and $\Sigma_n\in \mathbb{R}^{d\times d}$ denote mean and covariance, respectively. The covariance $\Sigma_n$ is also a diagonal matrix, because different dimensions of a user embedding should be independent from each other. During on-device training, $\mu_n$ is trainable, and $\Sigma_n$ is fixed at $\sigma^2\mathbf{I}$. To simplify analysis, we make assumptions on event independence and on-device training as follows.

\begin{assumption}\label{assumption:id}
    For $u_n \sim \mathcal{U}_n$, and for any $i,j\not=n$, the event of $\Pr(u_n | \mathcal{U}_i) \le \Pr(u_n|\mathcal{U}_n)$ and the event of $\Pr(u_n|\mathcal{U}_j) \le \Pr(u_n|\mathcal{U}_n)$ are independent.
\end{assumption}

\begin{assumption}\label{assumption:iteration}
    Each user performs at most $T$ local iterations.
\end{assumption}

\begin{assumption}\label{assumption:g_norm}
    During the local training of any user $n$, the $l_2$-norm of the clipped gradient $G(\hat{\nabla}_{\mu_n})$ is always bounded by $G^2$.
\end{assumption}

We then have the following theoretical result.

\begin{theorem}\label{gaussian_id}
    Under Assumptions~\ref{assumption:id}, \ref{assumption:iteration}, and~\ref{assumption:g_norm}, for a user embedding $u_n$ sampled from $\mathcal{U}_n$, when the prior $\Pr(\mathcal{U}_k)$ are the same for any $k$,
    \begin{equation}
    \Pr\left(\mathcal{U}_n \neq \arg\max_{\mathcal{U}_k}\Pr(\mathcal{U}_k | u_n)\right) \geq 1 - \left(\Phi(\frac{\parallel \eta TG\parallel}{\sigma})\right)^{N-1},
   \end{equation}
    where $\Phi(\cdot)$ denotes the CDF of the standard Gaussian distribution.   
\end{theorem}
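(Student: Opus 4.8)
The plan is to reduce the Bayes-optimal attribution rule to a nearest-mean classifier, decompose the ``correct attribution'' event into pairwise comparisons, factorize it using Assumption~\ref{assumption:id}, bound each pairwise probability by a one-dimensional Gaussian tail, and finally control the pairwise distances $\|\mu_n-\mu_k\|$ using the bounded-update structure of the on-device training rule.

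First, since every $\mathcal{U}_k=\mathcal{N}(\mu_k,\sigma^2\mathbf{I})$ has the same isotropic covariance and the priors $\Pr(\mathcal{U}_k)$ are equal, Bayes' theorem gives
\[
\arg\max_{\mathcal{U}_k}\Pr(\mathcal{U}_k\mid u_n)=\arg\max_{\mathcal{U}_k}\Pr(u_n\mid\mathcal{U}_k)=\arg\min_{\mathcal{U}_k}\|u_n-\mu_k\|^2 ,
\]
i.e.\ the cloud attributes $u_n$ to the mean nearest to it. Consequently $u_n$ is attributed correctly precisely on the event $\bigcap_{k\neq n}A_k$ with $A_k:=\{\|u_n-\mu_n\|\le\|u_n-\mu_k\|\}$ (equality holds only on a measure-zero set, which I ignore). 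Each $A_k$ coincides with the event $\Pr(u_n\mid\mathcal{U}_k)\le\Pr(u_n\mid\mathcal{U}_n)$ appearing in Assumption~\ref{assumption:id}, so that assumption lets me factorize $\Pr\!\big(\bigcap_{k\neq n}A_k\big)=\prod_{k\neq n}\Pr(A_k)$; hence the target probability equals $1-\prod_{k\neq n}\Pr(A_k)$, and it remains to upper-bound each $\Pr(A_k)$.

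Second, expanding the squared norms shows $A_k$ is equivalent to $\langle u_n-\mu_n,\ \mu_k-\mu_n\rangle\le\tfrac12\|\mu_k-\mu_n\|^2$. Since $u_n-\mu_n\sim\mathcal{N}(0,\sigma^2\mathbf{I})$, the scalar $\langle u_n-\mu_n,\mu_k-\mu_n\rangle$ is centered Gaussian with variance $\sigma^2\|\mu_k-\mu_n\|^2$, whence $\Pr(A_k)=\Phi\!\big(\|\mu_k-\mu_n\|/(2\sigma)\big)$. Third, I bound $\|\mu_k-\mu_n\|$: all users initialize at the common $\theta$ and, by Assumptions~\ref{assumption:iteration} and~\ref{assumption:g_norm}, take at most $T$ clipped-gradient steps of norm at most $G$, so $\|\mu_k-\theta\|\le\eta T G$ for each $k$ and hence $\|\mu_k-\mu_n\|\le 2\eta T G$ by the triangle inequality. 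By monotonicity of $\Phi$ this gives $\Pr(A_k)\le\Phi(\|\eta T G\|/\sigma)$, and multiplying over the $N-1$ indices $k\neq n$ yields $\Pr(\text{correct})\le\big(\Phi(\|\eta T G\|/\sigma)\big)^{N-1}$, which is exactly the stated lower bound on the misattribution probability.

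The only genuinely analytic step --- and the one I expect to be the main obstacle --- is computing $\Pr(A_k)$: one must recognize that projecting the isotropic noise $u_n-\mu_n$ onto the fixed direction $\mu_k-\mu_n$ produces a $\mathcal{N}(0,\sigma^2\|\mu_k-\mu_n\|^2)$ variable, and then see that the threshold $\tfrac12\|\mu_k-\mu_n\|^2$ normalizes to $\|\mu_k-\mu_n\|/(2\sigma)$, in which the length factors cancel; the leftover factor $2$ in the denominator is precisely what absorbs the pairwise bound $\|\mu_k-\mu_n\|\le 2\eta T G$ into the stated $\Phi(\|\eta T G\|/\sigma)$. The remaining pieces --- the reduction to nearest-mean classification, the factorization via Assumption~\ref{assumption:id}, and the telescoping of local updates under Assumptions~\ref{assumption:iteration}--\ref{assumption:g_norm} --- are routine bookkeeping.
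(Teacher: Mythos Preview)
Your proposal is correct and follows essentially the same route as the paper: reduce the MAP rule to nearest-mean classification, compute each pairwise probability $\Pr(A_k)=\Phi(\|\mu_k-\mu_n\|/(2\sigma))$ via a one-dimensional Gaussian projection (the paper does this by an equivalent coordinate rotation), bound $\|\mu_k-\mu_n\|\le 2\eta TG$ from the common initialization and bounded updates, and factorize over $k\neq n$ using Assumption~\ref{assumption:id}. Your use of the triangle inequality for the distance bound is actually slightly cleaner than the paper's Cauchy--Schwarz argument in its Lemma~\ref{lemma:bounded_diff}, but it yields the identical constant and the overall structure is the same.
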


%$\arg\max_k\Pr(\mathcal{U}_k|u) \not= n$ with at least a probability of $(1-\epsilon)$,

\begin{proof}
    Please refer to Appendix \ref{app:proof:theorem2}.
\end{proof}

%The proof sketch is as follows. We first prove that for any $k\not= n$, the distance of $\mu_k$ and $\mu_n$ is bounded. We next compute the probability of $\Pr(u|\mathcal{U}_n)\le \Pr(u|\mathcal{U}_k)$ given the distance of $\mu_k$ and $\mu_n$, which can be efficiently obtained by the isomorphism of the Gaussian distribution. We then compute the probability of $\forall k\not= n, \Pr(u|\mathcal{U}_n)\le \Pr(u|\mathcal{U}_k)$, which corresponds to the $\epsilon$ mentioned in the theorem. Please refer to the details in the appendix.

Theorem~\ref{gaussian_id} indicates that to increase the misattribution probability of user embedding, thereby enhancing data anonymization against the cloud, each mobile device can reduce the learning rate $\eta$, decrease the number of local iterations $T$, lower the $l_2$-norm of the clipped gradient $G^2$, or increase the variance $\sigma$.

%ability to posteriorly recognize user embeddings is positively related with the learning rate $\eta$, the number of local iterations $T$, and the $l_2$-norm of the clipped gradient $G^2$, and is negatively related with the variance $\sigma$. In addition, $T$ is related to the number of local samples and the batch size, and $G^2, \eta, \sigma$ are directly set. 
%Therefore, we can enhance privacy protection by reducing the learning rate, reducing the max norm of gradient, increasing the batch size, reducing the number of epochs, and increasing the variance in practice.
\begin{table*}[!t]
\caption{IDfree-PL vs. Baselines from inference accuracy with different language models over public datasets.}
\centering
\resizebox{\linewidth}{!}{
\begin{tabular}{cccccccccc}
\toprule
\multirow{2}{*}{Dataset} & \multirow{2}{*}{Model} & \multirow{2}{*}{Cloud w/o ID} & \multirow{2}{*}{On-Device} & \multicolumn{2}{c}{IDfree-PL} & \multicolumn{2}{c}{vs. Cloud w/o ID} & \multicolumn{2}{c}{vs. On-Device} \\
\cmidrule(r){5-6}\cmidrule(r){7-8}\cmidrule(r){9-10}
&&&& Beta & Gaussian & Beta & Gaussian & Beta & Gaussian \\
\midrule
\multirow{3}{*}{Sentiment140} & GPT2 & 80.00\% & 81.20\% & 83.36\%  & 83.48\% & +3.36\% & +3.48\% & +2.16\% & +2.28\%\\
 & T5 & 79.87\% & 81.75\% & 80.90\% & 80.92\% & +1.03\% & +1.05\% & -0.85\% & -0.83\%\\
 & Bart & 81.64\% & 82.00\% & 84.18\% & 84.30\% & +2.54\% & +2.66\% & +2.18\% & +2.30\%\\
 \cmidrule{2-10}
\multirow{3}{*}{Amazon} & GPT2 & 75.11\% & 78.28\% & 79.16\% & 78.86\% & +4.05\% & +3.75\% & +0.88\% & +0.58\%\\
 & T5 & 75.31\% & 80.02\% & 79.21\% & 78.87\% & +3.90\% & +3.67\% & -0.81\% & -1.15\%\\
 % \cmidrule{2-5}
 & Bart & 74.42\% & 78.18\% & 80.11\% & 79.96\% & +5.69\% & +5.54\% & +1.93\% & +1.78\%\\
 \cmidrule{2-10}
Reddit & GPT2 & 22.41\% & 19.51\% & 22.87\% & 23.05\% & +0.46\% & +0.64\% & +3.36\% & +3.54\% \\
% \midrule
% PinyinIME & GPT2 & 79.11\% & 76.50\% & 79.90\% & 79.90\% & +0.79\% & +0.79\% & +3.40\% & +3.40\% \\
% % \midrule
% IntelAssoc & Qwen1.8B & 1.38\% & 3.41\% & 2.95\% & 3.05\% & +1.57\% & +1.67\% & -0.46\% & -0.36\%\\
\bottomrule
\end{tabular}
}
\label{tab:performance_comparison}
\end{table*}

\section{Evaluation on Public Datasets}
\subsection{Evaluation Setups}\label{eval_setup}
\begin{figure*}[!t]
\centering
% \begin{minipage}{0.99\linewidth} 
% \centering 
% \includegraphics[width=0.99\columnwidth]{img-pdf/plot_legends.pdf}
% \end{minipage}

\subfigure{\includegraphics[width=0.6\linewidth]{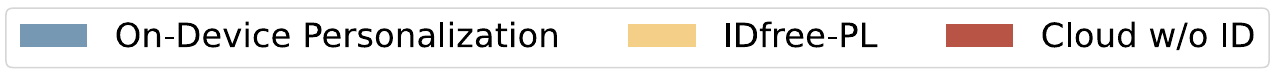}}\\
\vspace{-1.39em}
\begin{minipage}{0.99\linewidth} 
\centering
\subfigure[Sentiment140]{
\includegraphics[height=0.19\textwidth]{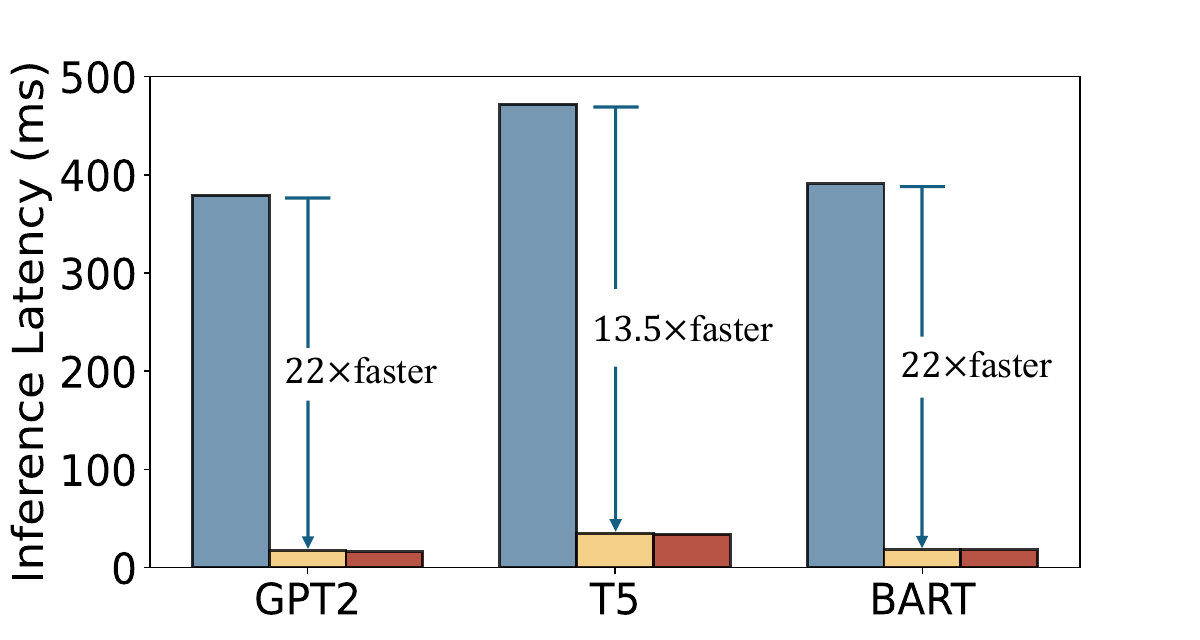}
}
\subfigure[Amazon]{
\includegraphics[height=0.19\textwidth]{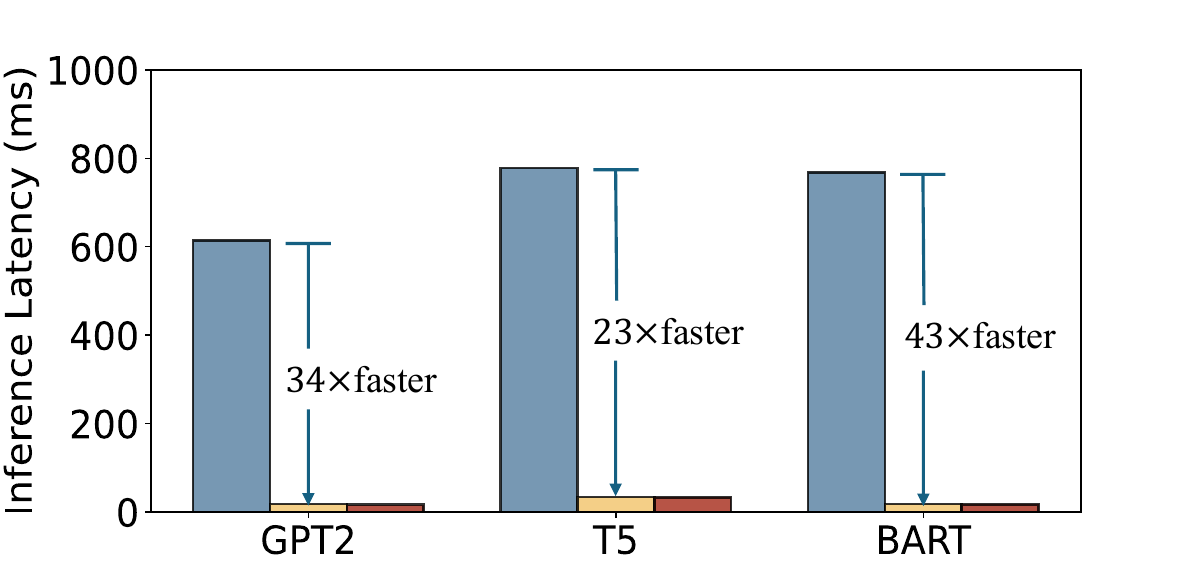}
}
\subfigure[Reddit]{
\includegraphics[height=0.19\textwidth]{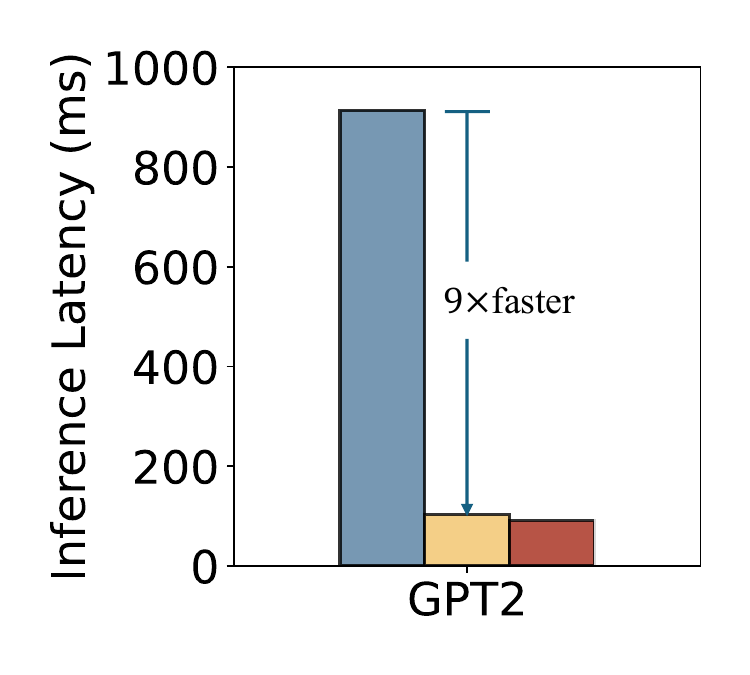}
}
\vspace{-1em}
\caption{IDfree-PL vs. Baselines from inference latency with different language models over different datasets.}
\label{fig:latency}
\end{minipage}
\end{figure*}

\textbf{Datasets and Language Tasks.} We first introduce three public datasets for different language tasks and describe data preprocessing details. (1) \textbf{Sentiment140 dataset for sentiment classification task}~\cite{sentiment140}: It comprises 1,600,000 tweets and corresponding sentiments (negative or positive) from 659,775 Twitter users. We naturally partition this dataset by letting each Twitter account correspond to a user. We keep only the users who hold more than 100 samples and get 163 users in total. For the split of each user’s training and test sets, about 80\% of the samples, which are with the timestamps no more than ``2009-06-06 23:53:52'' into the training set and take the remaining samples into the test set. 
(2) \textbf{Amazon-Kindle dataset for sentiment classification task}~\cite{amazon}: It comprises 25,600,000 reviews and corresponding ratings from 5,600,000 users. The ratings range from 0 to 5. We label the ratings no more than 3 as negative, the ratings equal to 4 as neutral, and the ratings equal to 5  as positive. We naturally partition this dataset by letting each account correspond to a user. To ensure that each user has sufficient data, we select 2,000 users with the most samples. By further filtering users who scored in fewer than three categories to eliminate data with low quality from ``paid reviewers'', we obtain 1,435 users. For the split of training and test sets, about 80\% of the samples, which are with the timestamps no more than 1,632,712,835,970 fall into the training set, while the rest falls into the test set. (3) \textbf{Reddit dataset for next word generation task}~\cite{leaf}: It comprises 56,587,343 comments from 1,660,820 reddit users. We naturally partition this dataset by letting each Reddit account correspond to a user. We keep only the users who hold 300 -- 320 comments and get 1,407 users in total. About 75\% of the samples fall into the training set, and the other samples fall into the test set.

\begin{figure*}[!t]
\centering
% \begin{minipage}{0.99\linewidth} 
% \centering 
% \includegraphics[width=0.99\columnwidth]{img-pdf/plot_legends.pdf}
% \end{minipage}

% \subfigure{\includegraphics[width=0.8\linewidth]{}}\\
% \vspace{-1em}
\begin{minipage}{0.99\linewidth} 
\centering
\subfigure{
\includegraphics[width=0.085\textwidth]{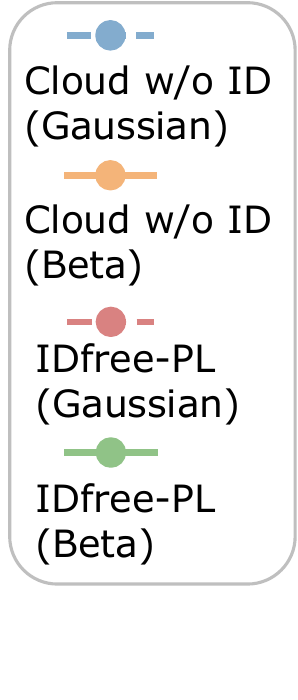}
}
\renewcommand{\thesubfigure}{(a)}
\subfigure[GPT2]{
\includegraphics[width=0.28\textwidth]{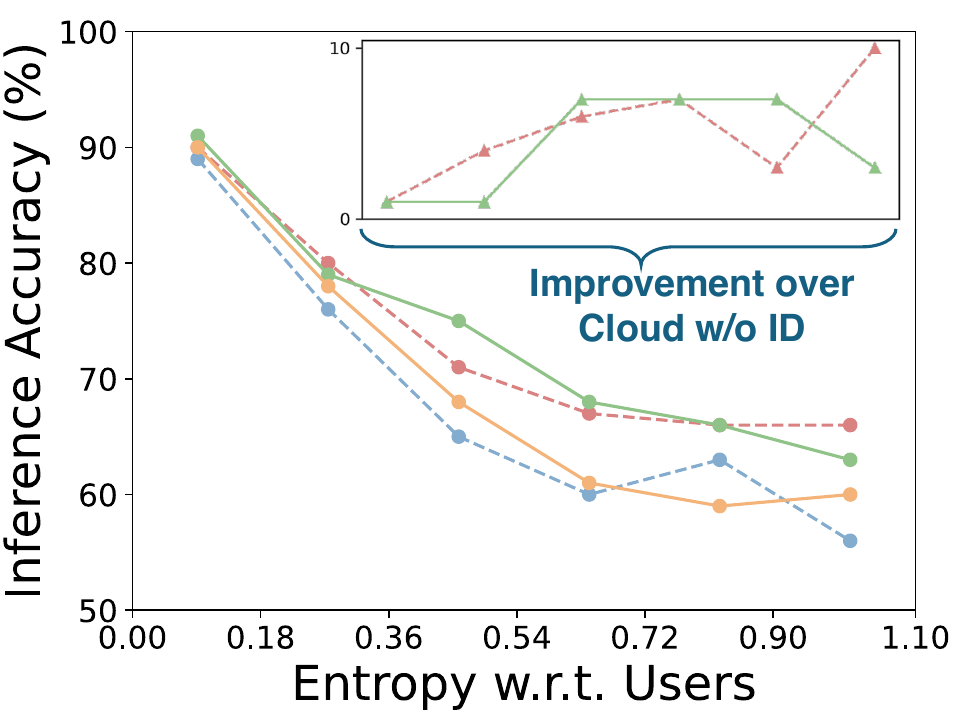}
}
\renewcommand{\thesubfigure}{(b)}
\subfigure[T5]{
\includegraphics[width=0.28\textwidth]{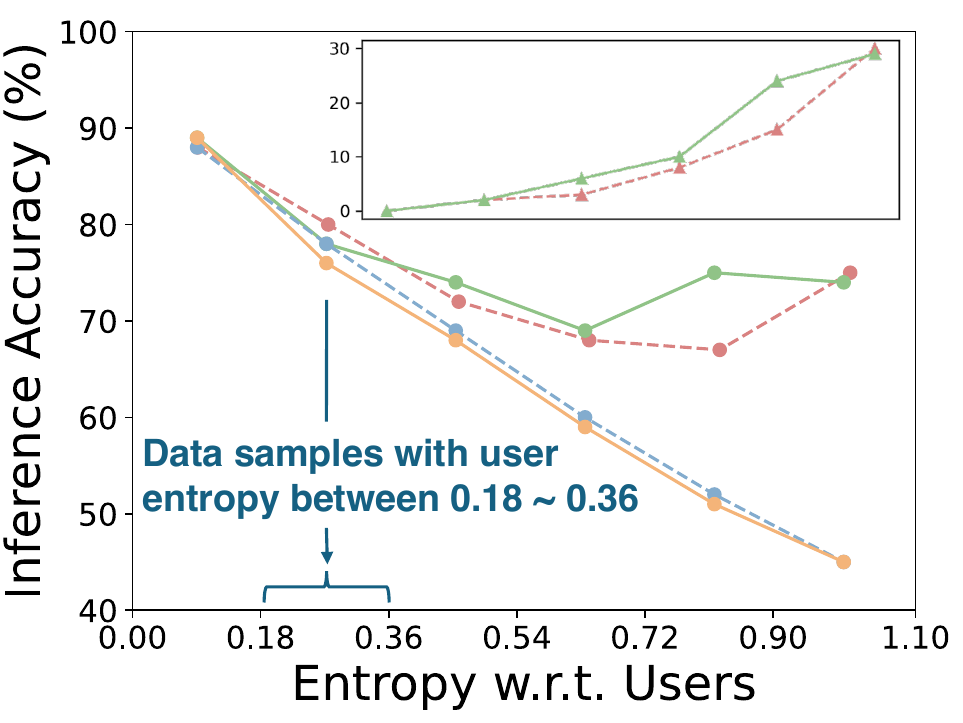}
}
\renewcommand{\thesubfigure}{(c)}
\subfigure[Bart]{
\includegraphics[width=0.28\textwidth]{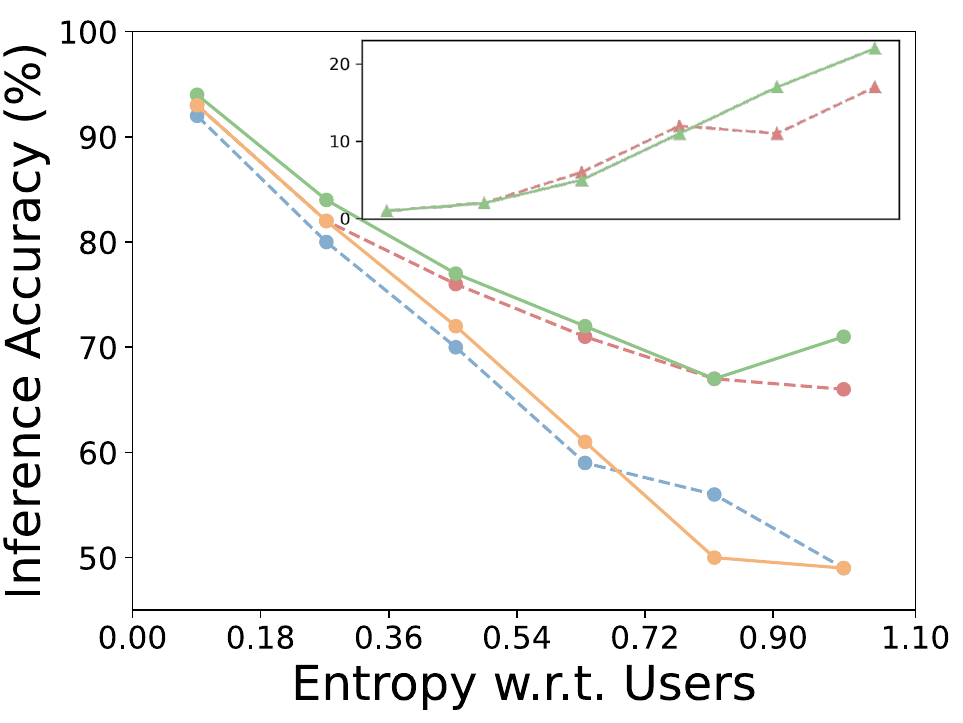}
}
\vspace{-1em}
\caption{Inference accuracy of test samples with varying sensitivity to user embeddings over Amazon-Kindle dataset.}\label{person_data}
\end{minipage}
\end{figure*}

\textbf{Models.} For the sentiment classification task over Sentiment140 and Amazon-Kindle datasets, we take three representative language models for evaluation, including the decoder-only GPT2-base from OpenAI~\cite{gpt2} with approximately 124 million parameters, encoder-decoder T5-base from Google~\cite{t5} with around 220 million parameters, and encoder-decoder Bart-base from Meta~\cite{bart} with about 130 million parameters. For the next word generation task over Reddit dataset, we take the decoder-only GPT2 for evaluation.

% 加入一些引用
\textbf{Baselines.}  (1) {\bf Cloud-Based Learning Without User ID} ({\bf Cloud w/o ID}) \cite{ChineseGPT}, which trains the language model over the training set without personal information. This baseline is currently deployed in mobile WeChat IME app, guarantees data anonymization, and is introduced to validate the necessity of personalization. (2) {\bf On-Device Learning for Model Personalization} ({\bf On-Device}) \cite{deeptype, MPDA}, which lets each mobile device download the cloud-based model and finetune over the local training data. Each user's personalized model needs to be deployed on the mobile device for inference, because it is unaffordable for the cloud to maintain a large number of user-specific models. This baseline does not upload the user data. 

% Since it is impractical for the cloud to store specific models for millions of users, and routing inference data to the appropriate model necessitates user-specific information, the finetuned models are stored on mobile devices for inference.

%reduce the learning rate $\eta$, decrease the number of local iterations $T$, lower the $l_2$-norm of the clipped gradient $G^2$, or increase the variance $\sigma$.

\textbf{Implementation.} We implement the personalized learning design with non-identifiable user embedding using Beta distribution, called {\bf IDfree-PL (Beta)}. We also implement the design with misattributed user embedding using multidimensional Gaussian, called {\bf IDfree-PL (Gaussian)}. To ensure high misattribution probability, we set the on-device learning rate to $1\times10^{-3}$, set the max norm of gradient clipping to 5, and set the variance to 0.2 as default. 
 
For cloud-based training, we adopt Adam with weight decay (AdamW) as the optimizer. We set the learning rate to $5\times 10^{-5}$ and employ a linear scheduler to maintain training stability. We train for 10 epochs with a batch size of 512. For on-device training, we set the user embedding dimension to 768, matching the hidden size of GPT-2, T5, and Bart. We use a batch size of 32 and train for 15 epochs as default. We use one NVIDIA V100 GPU to evaluate inference latency on cloud server, and test the inference latency of the on-device personalized model baseline using the Honor V30 Pro (smartphone) which contains an 8-core Kirin 990 CPU. 

For the implementation of the personalized inference in IDfree-PL, as the user embeddings are random vectors and are not easily transferred into input tokens, which may not be compatible with the sequence encoding in the transformers, we rewrite the forward pass of the embedding layer. The modified embedding layer outputs the concatenation of the user embedding and raw text embedding. Specifically, for the sentiment classification task, to increase the influence of the user embedding on the classification result, we place the user embedding after the raw text embedding and use the transformer's last hidden state of the top layer as the input to the classification head. For the next word generation task, following the idea of prompt tuning~\cite{prompt_tuning}, we place the user embedding before the raw embedding result to achieve personalized generation results.

\subsection{Evaluation Results}
\label{sec:evaluate:results}

%of the proposed designs and the baselines. We also show in Figure~\ref{fig:latency} the inference latency of the proposed designs and the baselines.

\textbf{Inference Accuracy and Latency.} We show in Table~\ref{tab:performance_comparison} and Figure~\ref{fig:latency} inference accuracy and latency. First, compared with cloud-based learning without user ID, we can observe that IDfree-PL (Beta) and IDfree-PL (Gaussian) averagely increase accuracy by 3.00\% and 2.95\%, respectively, while adding only 0.01s of inference latency. We can draw that personalization with anonymous user embeddings in our design can indeed improve inference accuracy while maintaining efficiency. Second, compared with on-device model personalization, IDfree (Beta) and IDfree (Gaussian) averagely increase accuracy by 1.26\% and 1.21\%, respectively. The improvement of IDfree-PL is mainly due to mitigating overfitting with large-scale samples on the cloud uploaded from many users. In contrast, on-device training suffers from the scarcity of local data samples. Specifically, for the complex next word generation task over Reddit dataset, the input-to-output mapping space is larger and each Reddit user's data are sparser compared to the classification tasks, leading to a higher generalization error in on-device training. Consequently, the accuracy of on-device personalized model is even lower than that of cloud-based model without personalization due to severe over-fitting. Regarding inference latency, IDfree-PL with cloud-based model serving is up to $43\times$ faster than personalized model serving on the resource-constrained mobile device.

%(2)~the proposed designs outperform on-device training in terms of inference latency. More specifically, IDfree-PL is at most 43 $\times$ faster than On-Device. Such a huge gap comes from IDfree-PL keeping the inference phase on the powerful cloud while the on-device moedl personalization execute the model inference on the resource-constrained mobile devices as discussed in Section~\ref{eval_setup}. 

% We can draw that (1) the proposed designs outperform on-device training in terms of inference accuracy. This is because the small number of on-device data samples may cause severe over-fitting problem. Especially in the complex next word generation task, the over-fitting problem can even lead to the inference accuracy of pure on-device training being lower than that of pure centralized training; (2) the proposed designs is more 

% By comparing the proposed designs with the baselines, we can observe that (1) compared with pure centralized learning, xyzx (beta) and xyzx (gaussian) averagely increase the inference accuracy by xx\% and xx\%, respectively, demonstrating that personalization, which is achieved by sampling from user-specific distributions in our designs, can indeed improve inference quality; (2) compared with 

\textbf{Impact of Personalization.} We experimentally investigate what types of data samples benefit from personalized learning. Intuitively, some general samples conform to universal behavior patterns across users, thus concatenating embeddings from different users may not affect the model outputs. In contrast, user-specific samples are sensitive to user embeddings, indicating that concatenating embeddings from different users significantly changes the model output. Based on this intuition, we study the relationship between the accuracy of samples and their sensitivity to user embeddings, and plot the statistical results over Amazon-Kindle dataset in Figure~\ref{person_data}. For each test sample, we randomly select 20 users and concatenate the raw input with the user embeddings from these users. We then compute the entropy of the 20 predictions as a measurement of the sample's sensitivity to user embeddings, called ``user entropy''. We divide the entropy into six intervals and compute the inference accuracy for samples within each interval. From Figure~\ref{person_data}, we can see that the improvement of IDfree-PL over the cloud-based learning without user ID is generally positively correlated with the sensitivity of the samples. For samples with the lowest sensitivity, IDfree-PL improves accuracy by an average of 0.56\%. In contrast, for samples with the highest sensitivity, IDfree-PL boosts accuracy by an average of 19.22\%. These results indicate that IDfree-PL effectively identifies samples that benefit from personalization, namely, those sensitive to user embeddings and with high user entropy, thereby increasing overall inference accuracy by enhancing performance on these sensitive samples.

%For the samples with the lowest sensitivity, IDfree-PL averagely increases the accuracy by 0.56\%; while for the samples with the highest sensitivity, IDfree-PL averagely increases the accuracy by 19.22\%. These results reveal that IDfree-PL can implicitly identify samples that require personalization, which are sensitive to user embeddings and have high user entropy, and increase the overall inference accuracy mainly by enhancing the accuracy on these sensitive samples.

% In particular, for the samples with the lowest sensitivity, IDfree-PL (Beta) with GPT-2, T5, and Bart results in accuracy improvements of only 0.51\%, 0.24\%, and 0.73\%, respectively, and IDfree-PL (Gaussian) with GPT-2, T5, and Bart results in accuracy improvements of only 0.30\%, 0.41\%, and 1.14\%, respectively. For the samples with the highest sensitivity, IDfree-PL (beta) with GPT-2, T5, and Bart can achieve accuracy improvements of 2.95\%, 29.31\%, and 26.29\%, and IDfree-PL (Gaussian) with GPT-2, T5, and Bart results in accuracy improvements of 9.86\%, 30.13\%, and 16.95\%, respectively. These results reveal that the proposed designs can implicitly identify samples that require personalization, which are sensitive to user embeddings and have high user entropy, and increase the overall inference accuracy mainly by enhancing the accuracy on these sensitive samples.
%As a comparison, we also plot the results of the design with some inappropriate settings, such as too large on-device learning rate (0.01) and too small variance (0.05). 

\textbf{Trade-off between Privacy and Personalization.} We evaluate IDfree-PL (Gaussian) of setting different variances to illustrate the trade-off between privacy and personalization. We show the inference accuracy and misattribution probability of IDfree-PL (Gaussian) with different models over the Amazon dataset in Table~\ref{tradeoffacc}. In particular, we randomly sample 1,000 user embeddings and compare the distance between the sampled embeddings and the means of the users' Gaussian distribution for the estimation of the misattribution probability. From Table~\ref{tradeoffacc}, we can see that a larger variance enhances privacy but reduces personalization, which is consistent with Theorem~\ref{gaussian_id}. More specifically, when the variance is set to $0.0^2$, which indicates that the user embeddings are static, IDfree-PL achieves the averaged inference accuracy of 80.0\% but the misattribution probability is 0.0\%. When the variance is set to $0.3^2$, IDfree-PL achieves the averaged inference accuracy of 78.77\%, while the misattribution probability is 70.93\%. These evaluation results demonstrate that a large variance increases the overlap of different user distributions, thereby enhancing privacy protection, at the cost of reducing the inference accuracy by 1.23\%.

\begin{table}[!t]
\caption{Accuracy (ACC) and misattribution probability (MIS) of IDfree-PL (Gaussian) with different variance (Var).}\label{tradeoffacc}
% \vspace{-1em}
\centering
\resizebox{0.99\linewidth}{!}{
\begin{tabular}{ccccccccc}
\toprule
\multirow{2}{*}{Model} & \multicolumn{2}{c}{Var=$0.0^2$} & \multicolumn{2}{c}{Var=$0.1^2$} & \multicolumn{2}{c}{Var=$0.2^2$} & \multicolumn{2}{c}{Var=$0.3^2$} \\
\cmidrule(r){2-3}\cmidrule(r){4-5}\cmidrule(r){6-7}\cmidrule(r){8-9}
& ACC & MIS & ACC & MIS & ACC & MIS & ACC & MIS \\
\midrule
GPT2 & 79.9\% & 0.0\% & 79.3\% & 11.4\% & 78.9\% & 61.9\% & 78.3\% & 83.3\% \\
T5 & 79.5\% & 0.0\% & 79.2\% & 2.1\% & 78.9\% & 41.5\% & 78.1\% & 72.4\% \\
Bart & 80.6\% & 0.0\% & 79.7\% & 1.9\% & 80.0\% & 25.3\% & 79.9\% & 57.1\% \\
\bottomrule
\end{tabular}
}
\end{table}

\begin{figure}[!t]
\centering
% \begin{minipage}{0.99\linewidth} 
% \centering 
% \includegraphics[width=0.99\columnwidth]{img-pdf/plot_legends.pdf}
% \end{minipage}
%\begin{minipage}{0.99\linewidth} 
\centering
\includegraphics[width=0.95\columnwidth]{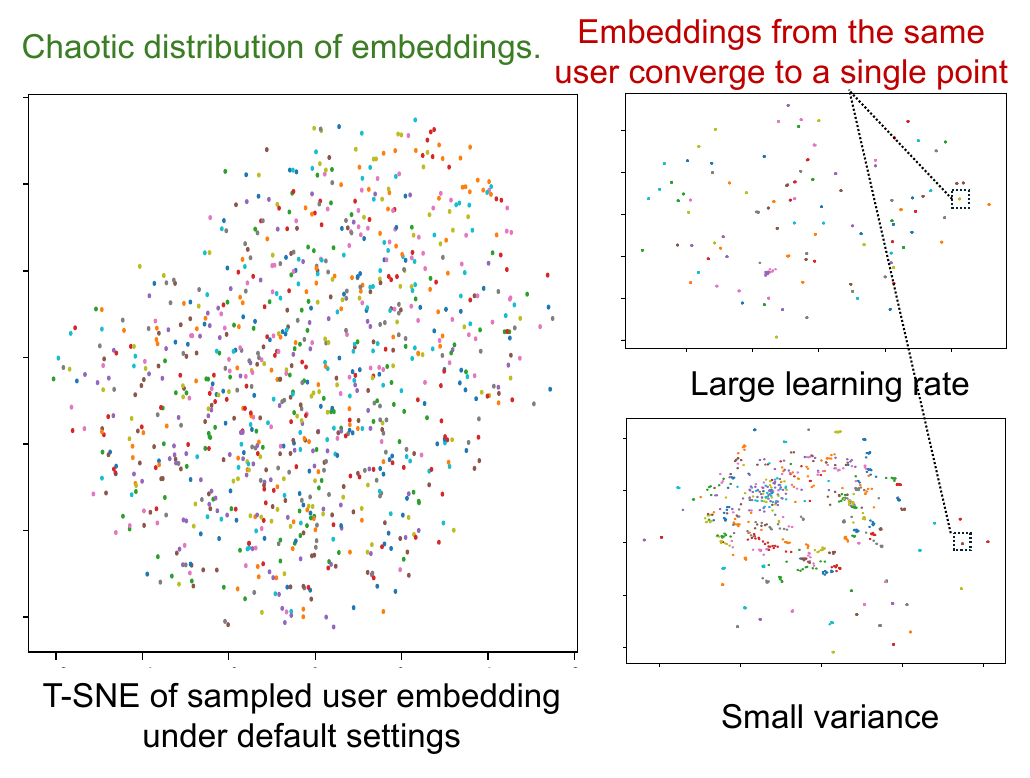}
\vspace{-1em}
\caption{Visualization of user embeddings.}\label{tsne_result}
%\end{minipage}
\end{figure}

\textbf{Visualization of Anonymous User Embeddings.} We visualize user embeddings to intuitively demonstrate data anonymization guarantee. We take the user embedding distributions from IDfree-PL (Gaussian) trained with GPT2 on Amazon-Kindle dataset for illustration. More visualization results are put in Appendix~\ref{app:tsne}. We sample 10 data points from each distribution. We then perform t-distributed stochastic neighbor embedding (T-SNE) on all the sampled points and plot the results after dimension reduction in Figure~\ref{tsne_result}. We represent embeddings sampled for the same user using the same color and show results from 100 users to reduce clutter. For comparison, we also plot the results in inappropriate settings, such as an excessively large on-device learning rate (i.e., 0.01) and a very small variance (i.e., 0.05). From Figure~\ref{tsne_result}, we can observe that under the default appropriate settings, the embeddings from different users are chaotically mixed together, well preserving data anonymization, while the inappropriate settings cause the embeddings from the same user being too close and almost converges to a single point, suffering from accurate user embedding attribution. 

%These results validates the proposed design can prevent the cloud from tracking user's data under appropriate settings.

\begin{table}[!t]
    \caption{Typical examples to illustrate how personalization works in the review rating classification task. \textcolor{blue}{Negative} descriptions are colored in blue, and \textcolor{red}{positive} descriptions are colored in red.}\label{tab:case_study}
    \centering
    
    \begin{tabular}{|ll|}
    \hline
    \textbf{Review 1.} Prediction w/o ID: $\leq 3$ & Personalized prediction: 4\\
    \multicolumn{2}{|l|}{\textit{Yes, I know lord Peter is smart, but does he \textcolor{blue}{have to recite limitless}}} \\
    \multicolumn{2}{|l|}{\textit{poetry and talk \textcolor{blue}{in ways few understand?} ...}}  \\
    \textbf{Review 2.} Prediction w/o ID:   $4$ & Personalized prediction: $\leq 3$\\
    \multicolumn{2}{|l|}{\textit{I \textcolor{red}{definitely liked the story} but have \textcolor{blue}{never liked stories} even in}}\\
    \multicolumn{2}{|l|}{\textit{series that end on such a totally sad note.}}\\
    \textbf{Review 3.} Prediction w/o ID:   $5$ & Personalized prediction: 4\\
    \multicolumn{2}{|l|}{\textit{... I \textcolor{red}{love this series so much} ... I \textcolor{red}{loved the romance} between Axel}}\\
    \multicolumn{2}{|l|}{\textit{and Nora...I \textcolor{red}{cannot wait for} Dirty Groom, the next book...}}  \\
    \hline
    \end{tabular}
\end{table}

{\bf Case Study on Personalized Rating.} We present three representative reviews in Table \ref{tab:case_study} to illustrate how personalization influences inference. Review 1 expresses a negative sentiment about the book, yet the user rates it 4 points. This is consistent with the user's tendency to rate less severe negative reviews as 4 points, and preference for Sayers' stories featuring Lord Peter. Review 2 is ambiguous, containing both positive and negative elements, which makes it challenging to rate. However, considering the user's past reviews, it is likely the user will assign a lower rating. Review 3 is highly positive, but the user rates it only 4 points. This behavior aligns with the user's pattern of giving positive ratings to most books and reserving 5 points for only a few exceptional ones.

\begin{comment}
\textbf{Case Study on Personalized Rating.} We choose three representative example to illustrate how personalization works in the inference stage. The first review appears to be a complaint about the book, indicating a negative sentiment. However, the user still awards 4 points because he/she usually gives 4 points to less serious negative reviews, and has a fondness for Sayers' stories with Lord Peter.
The second review contains a mix of positive and negative words, making it difficult to rate. However, based on the user's previous reviews, it is likely that he/she will give a lower rating.
The third review is highly positive, but the user only rates 4 points. We find that this user tends to give positive reviews to most books, reserving a score of 5 points for only few books.
\end{comment}
% We take a user from Amazon-Kindle as an example. This user generally uses some commendatory words in comments for a book, but the scoring is very strict. For the centralized model, which represents the general behavior of users, it would classify comments containing commendatory words as positive, while the true labels are not. For example, the true rating of the following sample is neutral, which can be correctly predicted by introducing a user-specific embedding:

\section{Evaluation on Industrial Dataset}

% \begin{table*}[!t]
% \caption{Inference accuracy and latency of the proposed design and baselines on industrial datasets.}
% \centering
% \resizebox{0.99\linewidth}{!}{
% \begin{tabular}{cccccccccc}
% \toprule
% \multirow{2}{*}{Dataset} & \multirow{2}{*}{Model} & Acc & \multicolumn{2}{c}{Acc. IDfree-PL} & \multicolumn{2}{c}{IDfree-PL vs. Central} & \multicolumn{3}{c}{Inference Latency (ms)} \\
% \cmidrule(r){4-5}\cmidrule(r){6-7}\cmidrule(r){8-10}
%  & & Central & Beta & Gaussian & Beta & Gaussian & Central & IDfree-PL & Device  \\
% \midrule
% PinyinIME & GPT2 & 79.11\% & 79.90\% & 79.90\% & +0.79\% & +0.79\% & 40 & 43 & 882\\
% % \midrule
% IntelAssoc & Qwen1.8B & 1.38\% & 2.95\% & 3.05\% & +1.57\% & +1.67\% & 86 & 87 & 83960 \\
% \bottomrule
% \end{tabular}
% }
% \label{tab:performance_industrial}
% \end{table*}

\subsection{Mobile Chinese Keyboard Application}

% chinese virtual keyboard xxx, xx, xx，写具体应用
Chinese virtual keyboard (e.g., iFLYTEK IME\footnote{https://srf.xunfei.cn/} and Sogou IME\footnote{https://shurufa.sogou.com/}), involves two important tasks that have already been widely deployed in practical mobile applications~\cite{Gboard, Baidu_IME_study, ChineseGPT, intelassoc}. One task is Pinyin Input Method Editor (PinyinIME), which transforms Chinese pinyin sequences to Chinese characters. The other task is Intelligent Association (IntelAssoc), which predicts possible next words or sentences based on the context already entered by the user to improve input efficiency. 

%~\cite{intelassoc}
% \footnote{Due to the requirement for an exact match between predicted results and true labels when calculating accuracy, offline top-1 accuracy is generally very low. However, in online deployments, users are likely to click on predictions that are semantically similar. Based on past online deployment experiences, the real click-through rate is often xx times the offline accuracy.}

\subsection{Evaluation Setups}

We build the dataset based on the real user posts on an online social network, \textit{Sina Weibo}\footnote{https://m.weibo.cn/}. We randomly select 400 active users in past 3 years and collect all their posts by an open-source Weibo dataset crawler. We then conduct Chinese segmentation into several pieces to simulate the user's input units, and further convert Chinese characters into perfect pinyin syllables. To reflect the data distribution of personalized input styles, we generate the pinyin input and their corresponding Chinese characters through: (1)~text segmentation at multiple granularities to simulate different input lengths preferred by users; (2)~pinyin input style selection~\cite{intelassoc} among ``perfect Pinyin'', ``abbreviated Pinyin'', and ``typo Pinyin'' with user-specific probabilities. We select about 80\% of samples based on timestamps as the training set and reserve the remaining for testing. 

%we determine with a user-specific probability whether the data will represent a 'perfect' Pinyin, an 'abbreviated' Pinyin or a 'typo' Pinyin. 

% We construct a self-built dataset based on the real user posts on an online social network, \textit{Sina Weibo}\footnote{https://m.weibo.cn/}. We randomly extract 400 active users in past 3 years and collect all their posts by an open-source Weibo dataset crawler, \textit{WeiboSpider}\footnote{https://github.com/nghuyong/WeiboSpider}. We next use \textit{jieba} tool\footnote{https://github.com/fxsjy/jieba} to conduct Chinese segmentation into several pieces to simulate the user input unit, and further incorporate \textit{pypinyin}\footnote{https://github.com/mozillazg/python-pinyin} package to convert Chinese characters into perfect pinyin syllables. To achieve the personalized input style data distribution, we generate the pinyin input and their corresponding Chinese characters across several aspects\footnote{We claim that the contents in users' posts have already inherently reflected some personalized properties.}: (1)~text segmentation with multiple granularity levels, to simulate the preferred length of a single query input by a user, (2) pinyin input style~\cite{intelassoc}, we determine with a user-specific probability whether the data will represent a 'perfect' Pinyin, an 'abbreviated' Pinyin or a 'typo' Pinyin. We selected about 80\% of the data based on timestamps as the training set, with the remaining data taken into the test set. 

For the PinyinIME task, context and pinyin input of raw sample are taken as the model input to predict the target word or sentence. For the IntelAssoc task, only context is taken as the model input to predict the target word or sentence. We take GPT2 and Qwen1.8B for PinyinIME and IntelAssoc, respectively. We adopt the similar evaluation settings as those on the public datasets. The major difference is that we use Bfloat16 to reduce the memory consumption for all the experiments with Qwen1.8B.

% \begin{table}[!t]
% \caption{Comparison of inference latency and accuracy between the proposed design and baselines on industrial datasets.}
% \centering
% \resizebox{0.99\linewidth}{!}{
% \begin{tabular}{cccccccc}
% \toprule
% \multirow{2}{*}{Dataset} & \multirow{2}{*}{Model} & \multicolumn{3}{c}{Inference Latency (ms)} & Acc & \multicolumn{2}{c}{Acc. IDfree-PL} \\
% \cmidrule(r){3-5}\cmidrule(r){7-8}
%  & & Central & XXX & Device & Central & Beta & Gaussian\\
% \midrule
% PinyinIME & GPT2 & 40 & 43 & 882 & 79.11\% & 79.90\% & 79.90\% \\
% % \midrule
% IntelAssoc & Qwen1.8B & 86 & 87 & 83960 & 1.38\% & 2.95\% & 3.05\% \\
% \bottomrule
% \end{tabular}
% }
% \label{tab:performance_industrial}
% \end{table}

\begin{table}[!t]
\caption{IDfree-PL vs. Baselines from inference latency on industrial datasets.}\label{tab:latency_industrial}
\vspace{-1em}
\centering
\resizebox{0.95\linewidth}{!}{
\begin{tabular}{ccccc}
\toprule
{Task} & {Model} & Cloud w/o ID  & On-Device & IDfree-PL \\
\midrule
PinyinIME & GPT2 & 40ms  & 882ms & 43ms\\
% \midrule
IntelAssoc & Qwen1.8B & 86ms  & 83960ms & 87ms \\
\bottomrule
\end{tabular}
}
\end{table}

% (+xxx)
\begin{table}[!t]
\caption{IDfree-PL vs. Real-Time Serving Baseline from top-1 inference accuracy on industrial datasets.}
\vspace{-1em}
\centering
\resizebox{0.99\linewidth}{!}{
\begin{tabular}{ccccc}
\toprule
\multirow{2}{*}{Task} & \multirow{2}{*}{Model} & \multirow{2}{*}{Cloud w/o ID} & \multicolumn{2}{c}{IDfree-PL} \\
\cmidrule(r){4-5}
 & & & Beta & Gaussian\\
\midrule
PinyinIME & GPT2 & 79.11\% & 79.90\% (+0.79\%) & 79.90\% (+0.79\%)\\
% \midrule
IntelAssoc & Qwen1.8B & 1.38\% & 2.95\% (+1.57\%)& 3.05\% (+1.67\%)\\
\bottomrule
\end{tabular}
}
\label{tab:performance_industrial}
\end{table}

\subsection{Evaluation Results}~\label{industrial_result}

We first present the inference latency of IDfree-PL and the baselines in Table~\ref{tab:latency_industrial}. We observe that compared to the cloud-based model without user ID, IDfree-PL increases inference latency by only up to 3 milliseconds. This ensures users receive responses within 50 milliseconds for PinyinIME task and 100 milliseconds for IntelAssoc task. In contrast, on-device personalized model inference takes over 800 milliseconds with GPT2 and 80 seconds with Qwen1.8B, which are impractical for mobile keyboard applications.

%However, On-Device causes up to 1000 $\times$ of inference latency, which is impractical for intelligent services.

We then report the top-1 inference accuracy in Table~\ref{tab:performance_industrial}. Compared to the cloud-based model without user ID, IDfree-PL (Beta) and IDfree-PL (Gaussian) increase the accuracy by 1.18\% and 1.23\%, respectively. We note that the accuracy of IntelAssoc task is typically low due to the vast linguistic space of Chinese~\cite{intelassoc} and the strict requirement for an exact match between predicted results and true labels when calculating accuracy.

%By comparing the proposed designs with the baseline, we can observe that compared with pure centralized training,
% qwen association: 839.63 seconds for 10 samples
% gpt2 reddit: 91.38 seconds for 100 samples
% gpt2 pinyin: 80.63  seconds for 100 samples
\section{Related Work}

% \subsection{Personalized Learning}
% 数据能上云，有实际用户标签，推荐系统，LLM role play
% 数据不能上云，本地，fl针对NLP gboard系列工作，pfl
% 数据能上云，无实际用户标签

We briefly review the related work on personalized learning from the perspective of different data settings in practical applications. 

The first line of work allows the cloud to collect users' data as well as their user ID or personal information. One typical application scenario is recommender system, which collects user profiles (e.g., user ID, gender, and age) and user behaviors (e.g., viewed goods, categories, and shops) ~\cite{DeepFM, EDCN, PNN, DIN, PLE, Explainable_Recommendation}. In general, the user profile is transformed into a lower-dimensional vector through a sparse user embedding layer~\cite{DIN}, which is then fed to the upper dense layers along with the embedded user behaviors. In the training phase, all model parameters including the user embedding layer, are trained end-to-end over the collected dataset. Another application scenario is personalized dialogue system \cite{DBLP:conf/acl/LiGBSGD16, DBLP:conf/emnlp/ChanLYCHZY19, DBLP:conf/naacl/WuMY21, RECAP}, where users' historical conversations are collected to enhance future model responses. Previous work ~\cite{DBLP:conf/acl/LiGBSGD16, DBLP:conf/emnlp/ChanLYCHZY19} used historical conversations to train continuous user embeddings. More recent work employed retrieval augmented generation (RAG) to enhance model response~\cite{RECAP}. These work all required explicit user ID for the cloud to record the user's historical data, and the embedding of a single user or item is static.

Another line of work operates under strict privacy regulations, assuming that all user data fields must remain unuploaded and unexposed to the cloud. Consequently, this line of work did not need to consider data anonymization. The most celebrated framework is federated learning (FL)~\cite{Fedavg, scaffold}. The training process of FL is under the coordination of a parameter server, which maintains a global model and selects mobile devices to perform local training and upload model updates. Google has deployed FL in their mobile keyboard application, called Gboard~\cite{Gboard, Gboard2, Gboard3}. To further improve model personalization effect, personalized federated learning (PFL)~\cite{perFLvbi, perFLshare, perFLdisentangle} was proposed, maintaining user-specific models on mobile devices. PFL focused on effectively and efficiently adapting the global model to the local data using techniques like parameter decoupling~\cite{perFLloc}, meta learning~\cite{perflmeta}, and model interpolation~\cite{perFLadapt}. However, both training and inference phases need to be executed on each resource-constrained mobile device. Therefore, the deployed language models on Gboard were quite light-weight, such as a 1.4MB Recurrent Neural Network (RNN) for next word prediction \cite{Gboard}. For Transformer-based complex language models, as shown in Section~\ref{sec:evaluate:results} and Section~\ref{industrial_result}, on-device inference inevitable breaks the real-time serving requirement. 

Parallel to existing work, this work focuses the new data setting that the cloud collects user data without any identifier. We further propose IDfree-PL to train a personalized language model on the cloud. In addition, the model inference phase of IDfree-PL is still executed on the cloud, thereby satisfying real-time serving requirement in practice even for complex language models. 

\section{Conclusion}

In this work, we have identified a new application requirement of learning a personalized language model on the cloud over the text data anonymously uploaded from mobile devices. We have proposed IDfree-PL, which dynamically samples embeddings from user-specific distributions trained on local user data to achieve personalization, while breaking the one-to-one mapping between users and embeddings to guarantee data anonymization. Evaluation results have demonstrated the effectiveness and the efficiency of IDfree-PL over cloud-based learning without personal information and on-device personalized model learning. 

% we have proposed a personalized learning framework over the on-cloud data without user identifier. To incorporate personalization while keeping data anonymization, embeddings sampled from user-specific distributions are concatenated to the original data. Evaluation results over both public and industrial datasets demonstrate the effectiveness of the proposed design over cloud-based learning without personal information, as well as the protection on the data anonymization. 

\section*{Acknowledgements}
This work was supported in part by National Key R\&D Program of China (No. 2022ZD0119100), China NSF grant No. 62025204, No. 62202296, No. 62272293, and No. 62441236, Tencent WeChat Research Program, and SJTU-Huawei Explore X Gift Fund. The opinions, findings, conclusions, and recommendations expressed in this paper are those of the authors and do not necessarily reflect the views of the funding agencies or the government.

\clearpage
%\normalem
\bibliographystyle{ACM-Reference-Format}
\bibliography{main}

\clearpage
\appendix
\newpage
\section{Analysis and Proofs}

\subsection{Proof of Theorem~\ref{nonid_beta}}\label{app:theorem1:proof}

\begin{proof}[Proof of Theorem~\ref{nonid_beta}]
Let $f=\sum_{n=1}^N w_n f(u_n;\alpha_n, \beta_n)$ denote the probability density function (PDF) of the mixture of the embeddings from users.

We first study the dimension $i$ of user $n$'s embedding, the PDF of which is $f(u_n^i;\alpha_n^i, \beta_n^i)$.
As
\begin{equation}
\begin{aligned}
&f(u_n^i;\alpha_n^i+1, \beta_n^i) = \frac{{(u_n^i)}^{\alpha_n^i}(1-{u_n^i})^{\beta_n^i-1}}{B(\alpha_n^i+1,\beta_n^i)},\\
and\ &f(u_n^i;\alpha_n^i, \beta_n^i+1) = \frac{{(u_n^i)}^{\alpha_n^i-1}(1-{u_n^i})^{\beta_n^i}}{B(\alpha_n^i,\beta_n^i+1)},
\end{aligned}
\end{equation}
where $B(\alpha,\beta)=\frac{\Gamma(\alpha)\Gamma(\beta)}{\Gamma(\alpha+\beta)}$ and $\Gamma(\cdot)$ is the gamma function. 

We then have
\begin{equation}
f(u_n^i;\alpha_n^i, \beta_n^i) = c_n^i(1) f(u_n^i;\alpha_n^i+1, \beta_n^i) + c_n^i(2) f(u_n^i;\alpha_n^i, \beta_n^i+1),
\end{equation}
where $c_n^i(1)=\frac{B(\alpha_n^i+1,\beta_n^i)}{B(\alpha_n^i,\beta_n^i)}$ and $c_n^i(2) = \frac{B(\alpha_n^i,\beta_n^i+1)}{B(\alpha_n^i,\beta_n^i)}$.

We further simplify $c_n^i(1), c_n^i(2)$.
\begin{equation}\label{decomp}
c_n^i(1)=\frac{B(\alpha_n^i+1,\beta_n^i)}{B(\alpha_n^i,\beta_n^i)}=\frac{\Gamma(\alpha_n^i+1)\Gamma(\alpha_n^i+\beta_n^i)}{\Gamma(\alpha_n^i)\Gamma(\alpha_n^i+\beta_n^i+1)}\overset{(a)}{=}\frac{\alpha_n^i}{\alpha_n^i+\beta_n^i},
\end{equation}
where (a) follows that $\Gamma(z+1)=z\Gamma(z)$ for any $z$. Similarly, we have $c_n^i(2)=\frac{\beta_n^i}{\alpha_n^i+\beta_n^i}$ with $c_n^i(1)+c_n^i(2)=1$, indicating that the PDF of user $n$'s embedding's $i$-th dimension can be written as the summation of two other PDFs.

Therefore, the PDF of user $n$'s embedding can be written as 
\begin{equation}
\begin{aligned}
f(u_n;\alpha_n, \beta_n) =& f(u_n^i;\alpha_n^i, \beta_n^i)\prod_{j=1,j\not=i}^d f(u_n^j;\alpha_n^j, \beta_n^j)\\
=&c_n^i(1) f(u_n^i;\alpha_n^i+1, \beta_n^i)\prod_{j=1,j\not=i}^d f(u_n^j;\alpha_n^j, \beta_n^j)\\
&+ c_n^i(2) f(u_n^i;\alpha_n^i, \beta_n^i+1)\prod_{j=1,j\not=i}^d f(u_n^j;\alpha_n^j, \beta_n^j)\\
=& c_n^i(1) f(u_n;\alpha'_n, \beta_n) + c_n^i(2) f(u_n;\alpha_n, \beta'_n),
\end{aligned}
\end{equation}
where $(\alpha'_n)^j=\alpha_n^j(j\not= i)$, $(\alpha'_n)^i=\alpha_n^i+1$, and $(\beta'_n)^j=\beta_n^j(j\not= i)$, $(\beta'_n)^i=\beta_n^i+1$.

Therefore, we provide another mixture of user embeddings as follows:
\begin{equation}
\begin{aligned}
&\sum_{m=1,m\not=n}^N w_m f(u_n;\alpha_m, \beta_m)\\
&+ w_n c_n^i(1) f(u_n;\alpha'_n, \beta_n) + w_n c_n^i(2) f(u;\alpha_n, \beta'_n),\\
&with\ w_n c_n^i(1)+w_n c_n^i(2)+\sum_{m=1,m\not=n}^N w_m = \sum_{m=1}^N w_m = 1,
\end{aligned}
\end{equation}
which also holds for the CDF of the user embedding distributions, and implies the non-identifiability of the mixture according to Condition~\ref{not_id}. In addition, it is easy to obtain infinitely many different representations by repeatedly applying equation~\ref{decomp} to different dimensions of embeddings from different users.
\end{proof}

\subsection{Proof of Theorem~\ref{gaussian_id}}\label{app:proof:theorem2}
\begin{proof}[Proof Sketch of Theorem~\ref{gaussian_id}]
We first prove that the $l_2$-norm of $\mu_n$ and $\mu_k$ ($k\not= n$) in the following lemma.

\begin{lemma}\label{lemma:bounded_diff}
Under Assumptions~\ref{assumption:iteration} and \ref{assumption:g_norm}, we have
\begin{equation*}
    \parallel \mu_n - \mu_k \parallel^2 \le 4\eta T^2 G^2. 
\end{equation*}
\end{lemma}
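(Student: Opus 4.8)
\textbf{Proof proposal for Lemma~\ref{lemma:bounded_diff}.}

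The plan is to bound how far each user's mean parameter $\mu_n$ can drift from its shared initialization $\theta$ over the course of local training, and then apply the triangle inequality to control $\parallel \mu_n - \mu_k \parallel$. The key structural facts are that all users start from the same $\mu_n^{(0)} = \theta$, that each user runs at most $T$ gradient steps (Assumption~\ref{assumption:iteration}), and that each clipped gradient has $l_2$-norm at most $G$, equivalently $\parallel G(\hat\nabla_{\mu_n}) \parallel^2 \le G^2$ (Assumption~\ref{assumption:g_norm}; I read $G^2$ as the bound on the squared norm, so $\parallel G(\hat\nabla_{\mu_n})\parallel \le G$).

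First I would write $\mu_n - \theta = -\eta \sum_{t=1}^{T} G(\hat\nabla_{\mu_n}^{(t)})$ by unrolling the update rule $\theta_n = \theta_n - \eta G(\hat\nabla_{\theta_n})$ across iterations. Taking norms and applying the triangle inequality gives $\parallel \mu_n - \theta \parallel \le \eta \sum_{t=1}^T \parallel G(\hat\nabla_{\mu_n}^{(t)}) \parallel \le \eta T G$. The same bound holds for $\mu_k$. Then by the triangle inequality $\parallel \mu_n - \mu_k \parallel \le \parallel \mu_n - \theta \parallel + \parallel \theta - \mu_k \parallel \le 2\eta T G$, and squaring yields $\parallel \mu_n - \mu_k \parallel^2 \le 4\eta^2 T^2 G^2$. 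I note that the statement in the excerpt reads $4\eta T^2 G^2$; I would expect this to be a typo for $4\eta^2 T^2 G^2$, and would write the proof with the $\eta^2$ factor (which is what the downstream use in Theorem~\ref{gaussian_id}, where the bound $\Phi(\parallel \eta T G \parallel / \sigma)$ appears, actually requires).

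There is no real obstacle here — the argument is a routine unrolling plus two applications of the triangle inequality. The only points needing a little care are (i) making explicit that the shared initialization is what kills the ``constant'' term so that the drift is purely the sum of clipped gradient steps, and (ii) being consistent about whether $G$ or $G^2$ denotes the norm bound, so that the final constant matches what Theorem~\ref{gaussian_id} consumes. If one wanted to avoid the triangle-inequality slack one could instead keep $\parallel \mu_n - \mu_k\parallel \le \eta\parallel \sum_t G(\hat\nabla_{\mu_n}^{(t)}) - \sum_t G(\hat\nabla_{\mu_k}^{(t)})\parallel$, but bounding each sum separately is simpler and suffices for the stated constant.
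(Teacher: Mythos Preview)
Your proposal is correct and follows essentially the same route as the paper: unroll the updates so the shared initialization cancels, then bound each user's accumulated gradient sum and combine. The paper works with squared norms (using $\|a-b\|^2\le 2\|a\|^2+2\|b\|^2$ and $\|\sum_{t=1}^{T_k} a_t\|^2 \le T_k\sum_t\|a_t\|^2$) rather than the plain triangle inequality, but lands on exactly your $4\eta^2 T^2 G^2$, confirming your diagnosis that the stated $4\eta T^2 G^2$ is a typo.
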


Lemma~\ref{lemma:bounded_diff} indicates that the distance between $\mu_n$ and $\mu_k$ are within $2\parallel\eta TG\parallel$. 

We next show the probability of $\Pr(u_n|\mathcal{U}_n)\le \Pr(u_n|\mathcal{U}_k)$ given the distance between $\mu_n$ and $\mu_k$ in the following lemma.

\begin{lemma}\label{lemma:pr_distance}
For a sample $u$ sampled from $\mathcal{U}_n=\mathcal{N}(\mu_n,\sigma^2 \mathbf{I})$, the probability of $u_n$ not being closer to $\mu_k(k\not= n)$ is as follows:
\begin{equation*}
   \Pr(\parallel u_n-\mu_n\parallel^2 \le \parallel u_n-\mu_k\parallel^2) = \Phi\left(\frac{\parallel \mu_n-\mu_k \parallel}{2\sigma}\right),
\end{equation*}
where $\Phi(\cdot)$ is the CDF of standard gaussian.
\end{lemma}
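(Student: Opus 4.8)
The plan is to reduce the event, which superficially involves the full $d$-dimensional Gaussian vector $u_n$, to a statement about a single one-dimensional Gaussian, after which the identity follows by standardization. First I would expand both squared norms and subtract: since $\parallel u_n - \mu_n\parallel^2 = \parallel u_n\parallel^2 - 2 u_n^\top \mu_n + \parallel\mu_n\parallel^2$ and analogously for $\mu_k$, the quadratic term $\parallel u_n\parallel^2$ cancels, so the event $\parallel u_n-\mu_n\parallel^2 \le \parallel u_n-\mu_k\parallel^2$ is equivalent to the \emph{linear} inequality $2 u_n^\top(\mu_n - \mu_k) \ge \parallel\mu_n\parallel^2 - \parallel\mu_k\parallel^2$. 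This cancellation of the quadratic part is the structural reason the final answer depends only on $\parallel\mu_n - \mu_k\parallel$ and $\sigma$.

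Writing $v = \mu_n - \mu_k$ and using the factorization $\parallel\mu_n\parallel^2 - \parallel\mu_k\parallel^2 = v^\top(\mu_n + \mu_k)$, I would recast the condition as $u_n^\top v \ge \tfrac12 v^\top(\mu_n + \mu_k)$. The next step collapses the problem onto one dimension: the scalar projection $Z := v^\top u_n$ is Gaussian, since $u_n \sim \mathcal{N}(\mu_n, \sigma^2 \mathbf{I})$ gives $Z \sim \mathcal{N}(\mu_n^\top v,\ \sigma^2\parallel v\parallel^2)$, where the variance follows from $\mathrm{Var}(v^\top u_n) = v^\top(\sigma^2 \mathbf{I})v = \sigma^2\parallel v\parallel^2$. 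The required probability is thereby reduced to a single one-dimensional Gaussian tail.

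The crux is then the standardization computation. Subtracting the mean $\mu_n^\top v$ from the threshold $\tfrac12 v^\top(\mu_n + \mu_k)$ yields $\tfrac12 v^\top(\mu_n + \mu_k) - \mu_n^\top v = -\tfrac12 v^\top(\mu_n - \mu_k) = -\tfrac12\parallel v\parallel^2$, and dividing by the standard deviation $\sigma\parallel v\parallel$ simplifies this to $-\parallel v\parallel/(2\sigma)$. Hence the probability equals $\Pr(W \ge -\parallel v\parallel/(2\sigma))$ for a standard normal $W$, and by the symmetry $\Pr(W \ge -t) = \Phi(t)$ this is exactly $\Phi\!\left(\parallel\mu_n-\mu_k\parallel/(2\sigma)\right)$, as claimed. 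The clean collapse of the threshold to $-\tfrac12\parallel v\parallel^2$ is the only nontrivial moment; everything else is bookkeeping.

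I do not expect a substantive obstacle, since the statement is an exact identity rather than a bound and requires no approximation. The one point meriting a remark is the degenerate case $v = 0$ (i.e.\ $\mu_n = \mu_k$), where $Z$ is deterministic and the formula $\Phi(0)=\tfrac12$ fails to match the almost-sure equality of the two distances; this is excluded precisely when distinct users have distinct means, which is the relevant regime. For $v \neq 0$ the projection $Z$ has strictly positive variance, so its law is continuous and the distinction between $\le$ and $<$ in the event is immaterial (the boundary has probability zero). I would state this non-degeneracy explicitly and otherwise present the three displayed reductions above.
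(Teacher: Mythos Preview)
Your argument is correct and is essentially the same reduction the paper uses: collapse the $d$-dimensional event to a one-dimensional Gaussian along the direction $\mu_n-\mu_k$ and standardize. The only difference is packaging---the paper performs an explicit translation and rotation so that $\mu_n$ sits at the origin and $\mu_k-\mu_n$ aligns with the first axis, then invokes isotropy of $\mathcal{N}(\mu_n,\sigma^2\mathbf{I})$ to conclude the first coordinate is $\mathcal{N}(0,\sigma^2)$; you instead expand the squared norms directly and read off the law of the linear functional $v^\top u_n$. Your route avoids the appeal to rotational invariance and is marginally more self-contained, while the paper's version is a bit more geometric; neither buys anything substantive over the other. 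Your remark on the degenerate case $\mu_n=\mu_k$ is a welcome addition that the paper omits.
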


As the covariance of $\mathcal{U}_k$ is the same as that of $\mathcal{U}_n$ in the proposed design, $\parallel u_n-\mu_n\parallel^2 \le \parallel u_n-\mu_k\parallel^2$ is a sufficient and necessary condition of $\Pr(u_n|\mathcal{U}_n)\le \Pr(u_n|\mathcal{U}_k)$. Substituting Lemma~\ref{lemma:bounded_diff} into Lemma~\ref{lemma:pr_distance}, we have 
\begin{equation}
\Pr(\parallel u_n-\mu_n\parallel^2 \le \parallel u_n-\mu_k\parallel^2) \le \Phi\left(\frac{\parallel\eta TG\parallel}{\sigma}\right)
\end{equation}

Then, under Assumption~\ref{assumption:id}, we have
\begin{equation}
\begin{aligned}
&\Pr(\forall k\not= n,\ \parallel u_n-\mu_n\parallel^2 \le \parallel u_n-\mu_k\parallel^2) \\
= &\prod_{k=1,k\not= n}^N \Pr(\parallel u_n-\mu_n\parallel^2 \le \parallel u_n-\mu_k\parallel^2) \le \left(\Phi\left(\frac{\parallel\eta TG\parallel}{\sigma}\right)\right)^{N-1}
\end{aligned}
\end{equation}

Let $\epsilon$ denote $\left(\Phi\left(\frac{\parallel\eta TG\parallel}{\sigma}\right)\right)^{N-1}$, we finally have 
\begin{equation}
    \Pr(\arg\max_k\Pr(\mathcal{U}_k|u_n) \not= n)\ge 1-\epsilon.
\end{equation}

\end{proof}

\subsection{Proof of Lemma}\label{app:proof:lemmas}

\begin{proof}[Proof of Lemma~\ref{lemma:id_condition}]
    Please refer to the main Theorem in \cite{finitemixture_id_2}.
\end{proof}

\begin{figure*}[!t]
\centering
    \centering
    \subfigure[T5, Gaussian distribution]{
    \includegraphics[height=0.13\textwidth]{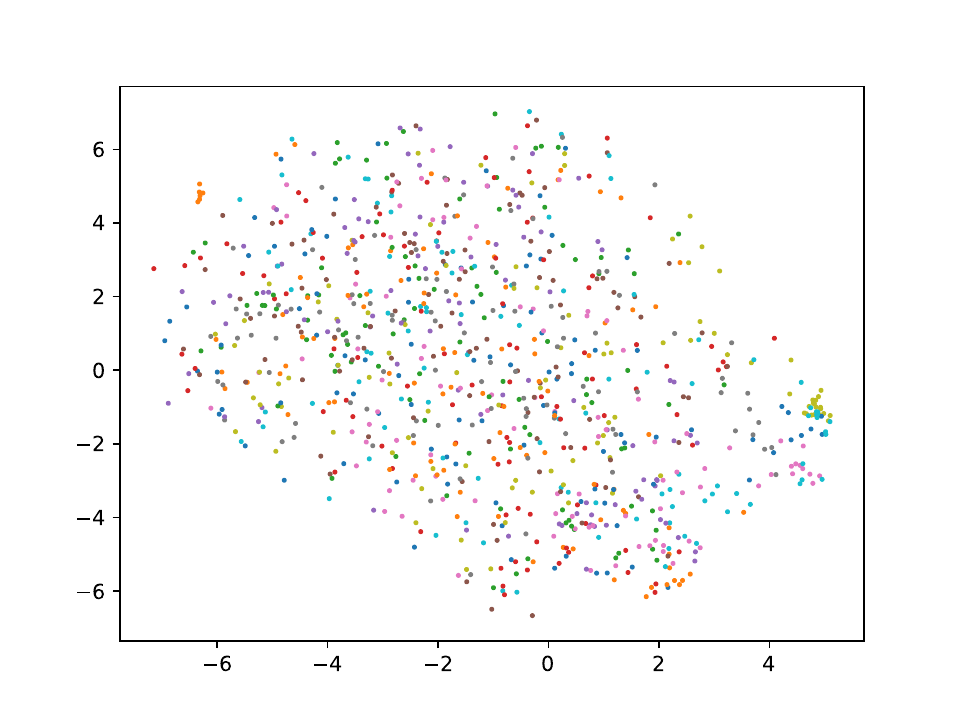}
    }
    \subfigure[T5, Beta distribution]{
    \includegraphics[height=0.13\textwidth]{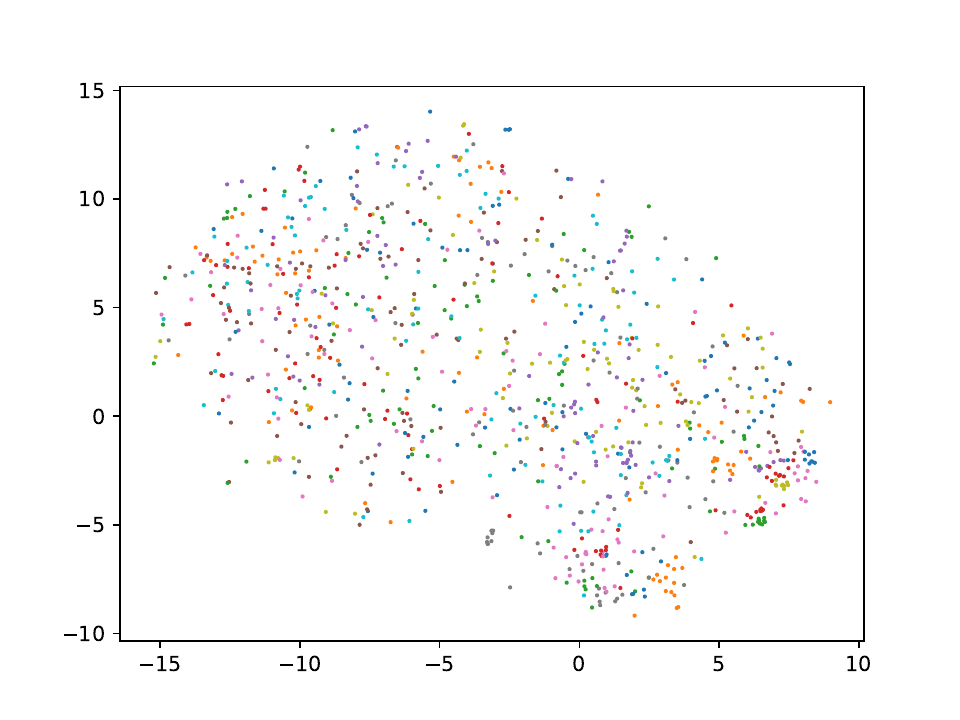}
    }
    \subfigure[BART, Gaussian distribution]{
    \includegraphics[height=0.13\textwidth]{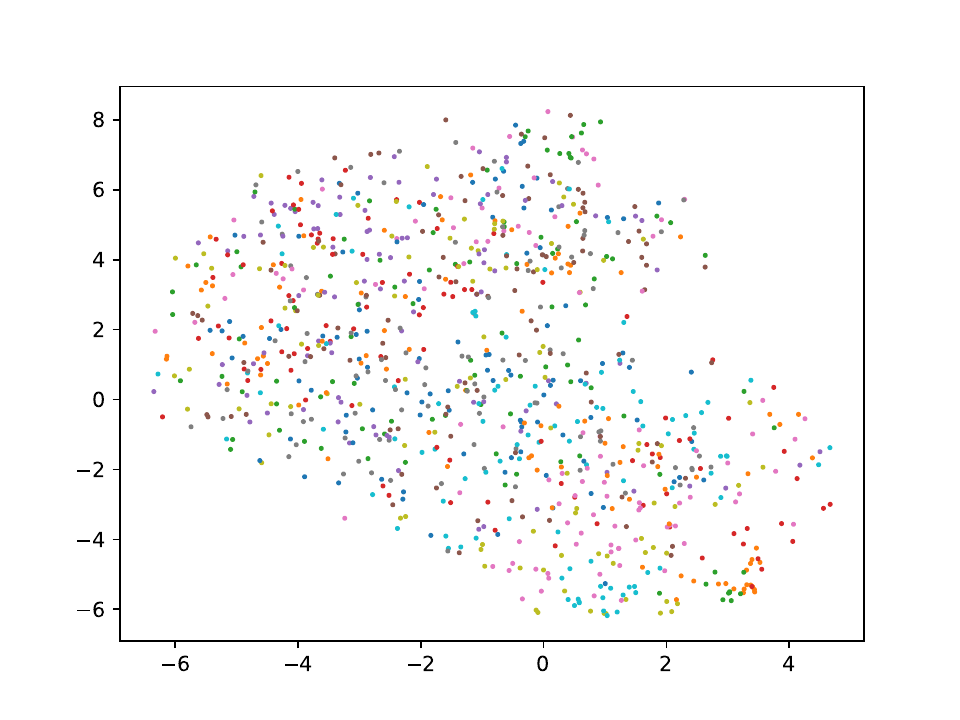}
    }
    \subfigure[BART, Beta distribution]{
    \includegraphics[height=0.13\textwidth]{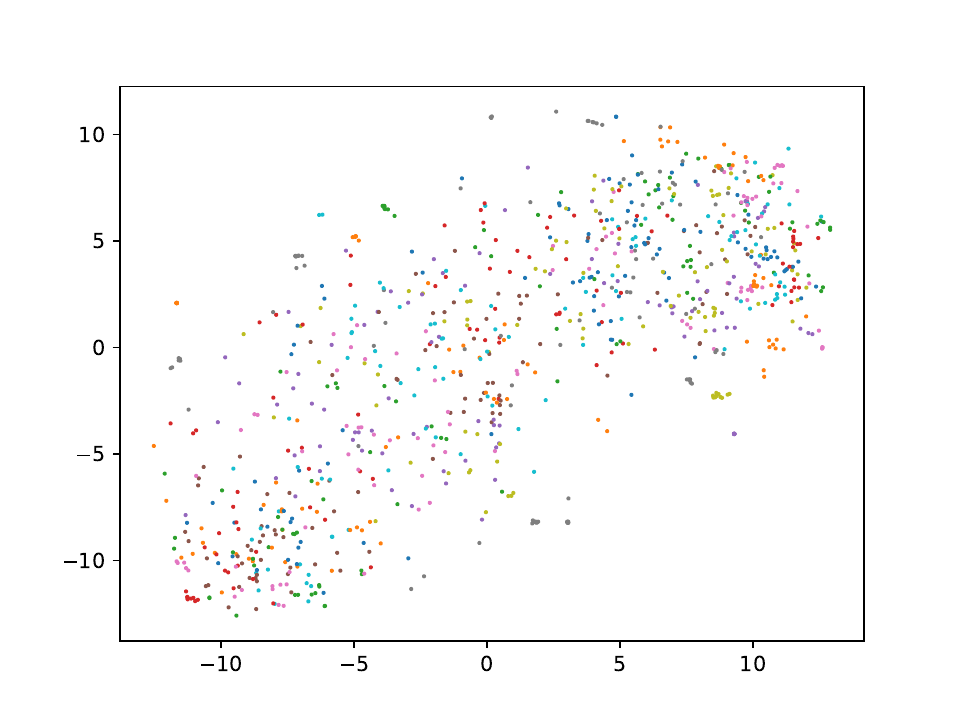}
    }
    \subfigure[GPT2, Beta distribution]{
    \includegraphics[height=0.13\textwidth]{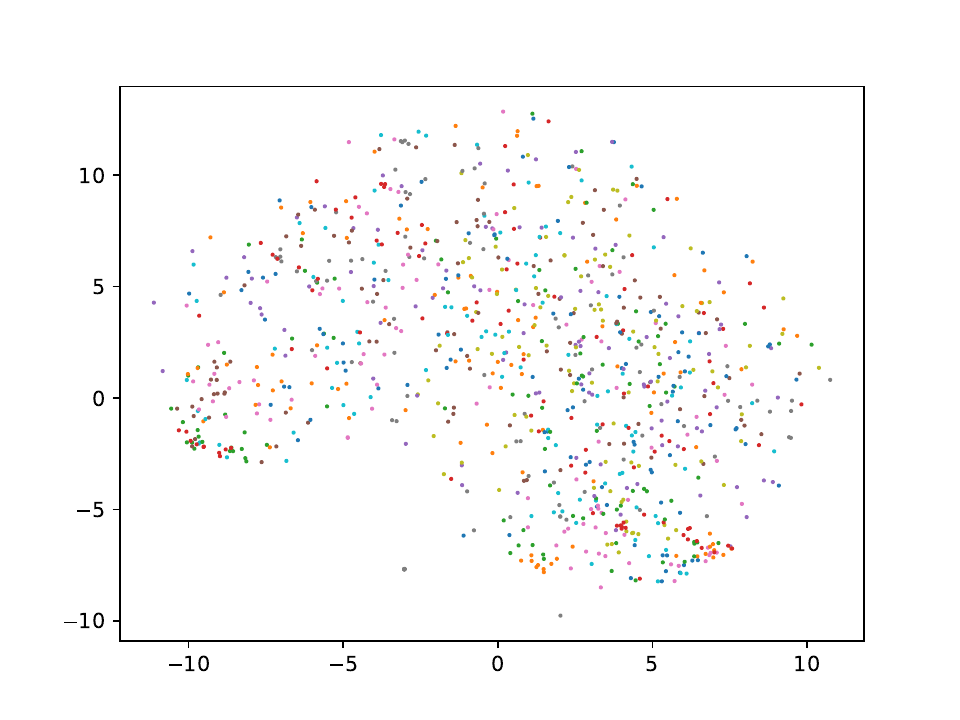}
    }
    %\vspace{-1em}
    \caption{T-SNE visualization of user embeddings under Gaussian and Beta distributions with different models over Amazon-Kindle dataset.}
    \label{fig:app:tsne}
\end{figure*}

\begin{proof}[Proof of Lemma~\ref{lemma:bounded_diff}] Let ${\mu}_{*}$ denote the initial mean of all users, $T_k$ denotes the number of iterations during user $k$'s local training, we then have
\begin{equation}
\begin{aligned}
\parallel \mu_n - \mu_k \parallel^2 &= \parallel (\mu_* - \sum_{t=1}^{T_k}\eta G(\hat{\nabla}_{\mu_k})) - (\mu_* - \sum_{t=1}^{T_n}\eta G(\hat{\nabla}_{\mu_n})) \parallel^2 \\
& = \parallel \sum_{t=1}^{T_k}\eta G(\hat{\nabla}_{\mu_k}) - \sum_{t=1}^{T_n}\eta G(\hat{\nabla}_{\mu_n}) \parallel^2 \\
& \le 2\parallel \sum_{t=1}^{T_k}\eta G(\hat{\nabla}_{\mu_k}) \parallel^2 + 2\parallel \sum_{t=1}^{T_n}\eta G(\hat{\nabla}_{\mu_n}) \parallel^2 \\
& \le 2\eta^2 T_k \sum_{t=1}^{T_k} \parallel G(\hat{\nabla}_{\mu_k}) \parallel^2 + 2\eta^2 T_n \sum_{t=1}^{T_n} \parallel G(\hat{\nabla}_{\mu_n}) \parallel^2 \\
& \overset{(a)}{\le} 4\eta^2 T^2 G^2,
\end{aligned}
\end{equation}
where (a) follows from Assumptions~\ref{assumption:iteration} and \ref{assumption:g_norm}.
\end{proof}

\begin{proof}[Proof of Lemma~\ref{lemma:pr_distance}]
First, translate and rotate the coordinate system to make $\mu_n$ be the origin and the first axis align with $\mu_k-\mu_n$. Then, let $\{x_1,x_2,\cdots,x_d\}$ denotes the new coordinate of $u$, when $\parallel u-\mu_n \parallel\le \parallel u-\mu_k \parallel$, we have
\begin{equation}
    x_1^2 + \sum_{i=1}^d x_i^2 \le (x_1-\parallel \mu_k-\mu_n \parallel)^2 + \sum_{i=1}^d x_i^2,
\end{equation}
which indicates $x_1 \le \frac{\parallel \mu_k-\mu_n \parallel}{2}$. By the isotropic of $\mathcal{U}_n$, we then have $x_1\sim \mathcal{N}(0,\sigma^2)$, so that
\begin{equation}
\begin{aligned}
    \Pr(x_1 \le \frac{\parallel \mu_k-\mu_n \parallel}{2}) &= \Pr(\frac{x_1}{\sigma}\le \frac{\parallel \mu_k-\mu_n \parallel}{2\sigma})\\
    &= \Phi(\frac{\parallel \mu_k-\mu_n \parallel}{2 \sigma}),
\end{aligned}
\end{equation}
where $\Phi(\cdot)$ denotes the CDF of standard gaussian.
\end{proof}

% \section{SUPPLEMENTARY EVALUATION RESULTS}

% \subsection{Implementation Details}\label{app:implement}

% As the user embeddings are random vectors and are not easily transferred into input tokens, which may not be compatible with the sequence encoding in the transformers, we rewrote the forward pass of the embedding layer. The modified embedding layer outputs the concatenation of the user embedding and raw text embedding. Specifically, for the sentiment classification task, to increase the influence of the user embedding on the classification result, we place the user embedding after the raw text embedding and use the transformer's last hidden state of the top layer as the input to the classification head. For the next word generation task, following the idea of prompt tuning~\cite{prompt_tuning}, we place the user embedding before the raw embedding result to achieve personalized generation results.

\section{Visualization of User Embeddings}\label{app:tsne}

We demonstrate additional results of T-SNE visualization of user embeddings in Figure~\ref{fig:app:tsne}. The user embeddings are sampled from Gaussian or Beta distribution, trained with GPT2, T5 or BART over Amazon-Kindle Review dataset. Under experiment settings, most of the embeddings from different users are mixed together, preventing the cloud from identifying data from different users

\end{document}